\theoremstyle{plain}
\newtheorem{myTheorem}{Theorem}
\newtheorem{myLemma}{Lemma}
\newtheorem{myAssumption}{Assumption}
\newtheorem{myRemark}{Remark}
\begin{document}

\title{Non-Graph Data Clustering via $\mathcal O(n)$ Bipartite Graph Convolution}

\author{Hongyuan Zhang, Jiankun Shi, Rui Zhang, 
and Xuelong Li$^*$, \IEEEmembership{~Fellow,~IEEE} \thanks{$^*$ Corresponding author}

\thanks{This work is supported by The National Natural Science Foundation of China (No. 61871470).}

\thanks{The authors are with the School of Artificial Intelligence, OPtics and ElectroNics (iOPEN), Northwestern Polytechnical University, Xi'an 710072, P.R. China. 
}

\thanks{
    \copyright 2022 IEEE.  Personal use of this material is permitted.  Permission from IEEE must be obtained for all other uses, in any current or future media, including reprinting/republishing this material for advertising or promotional purposes, creating new collective works, for resale or redistribution to servers or lists, or reuse of any copyrighted component of this work in other works.
}

\thanks{E-mail: hyzhang98@gmail.com, henusjk@163.com, ruizhang8633@gmail.com, li@nwpu.edu.cn.}

}

\markboth{IEEE TRANSACTIONS ON PATTERN ANALYSIS AND MACHINE INTELLIGENCE}{Zhang \MakeLowercase{\textit{et al.}}: 
Non-Graph Data Clustering via $\mathcal O(n)$ Bipartite Graph Convolution}

\IEEEtitleabstractindextext{
\justifying  
    \begin{abstract}
        Since the representative capacity of graph-based clustering methods is usually 
        limited by the graph constructed on the original features,
        it is attractive to find whether graph neural networks (\textit{GNNs}), a strong extension of neural networks to graphs, 
        can be applied to augment the capacity of 
        graph-based clustering methods. 
        The core problems mainly come from two aspects. 
        On the one hand, the graph is unavailable in the most general clustering scenes 
        so that how to construct graph on the non-graph data and the quality of 
        graph is usually the most important part. 
        On the other hand, given $n$ samples, 
        the graph-based clustering methods usually consume at least $\mathcal O(n^2)$ 
        time to build graphs and 
        the graph convolution requires nearly $\mathcal O(n^2)$ for a dense graph and 
        $\mathcal O(|\mathcal{E}|)$ for a sparse one with $|\mathcal{E}|$ edges. 
        Accordingly, both graph-based clustering and GNNs suffer from the severe inefficiency problem. 
        To tackle these problems, 
        we propose a novel clustering method, \textit{AnchorGAE}, 
        with the self-supervised estimation of graph and efficient graph convolution. 
        We first show how to convert a non-graph dataset into a graph dataset, 
        by introducing the generative graph model and anchors. 
        A bipartite graph is built via generating anchors and estimating the 
        connectivity distributions of original points and anchors. 
        We then show that the constructed bipartite 
        graph can reduce the computational complexity of graph convolution 
        from $\mathcal O(n^2)$ and $\mathcal O(|\mathcal{E}|)$ to $\mathcal O(n)$. 
        The succeeding steps for clustering can be easily designed as $\mathcal O(n)$ operations. 
        Interestingly, the anchors naturally lead to siamese architecture with 
        the help of the Markov process. 
        Furthermore, the estimated bipartite graph is updated dynamically 
        according to the features extracted by GNN modules, 
        to promote the quality of the graph by exploiting the high-level 
        information by GNNs. 
        However, we theoretically prove that the self-supervised paradigm frequently 
        results in a collapse that often occurs after 2-3 update iterations in experiments, 
        especially when the model is well-trained. 
        A specific strategy is accordingly designed to prevent the collapse. 
        The experiments support the theoretical analysis and 
        show the superiority of AnchorGAE.
    \end{abstract}

\begin{IEEEkeywords}
    Graph Convolution Network, Efficient Clustering, Anchors, Siamese Network, Self-Supervised Learning.
\end{IEEEkeywords}

}

\maketitle

\section{Introduction}
Graph-based clustering methods \cite{SC,RatioCut,NormalizedCut,CAN,Chen1,NCARL} are 
important in the clustering field since 
they can capture the data topology and group data points non-linearly via constructing a graph. 
For instance, spectral clustering \cite{SC} originates from a relaxed graph-cut problem, 
\textit{e.g.}, Ratio Cut \cite{RatioCut}, Normalized Cut \cite{NormalizedCut}, 
Balanced Cut \cite{BalancedCut}, \textit{etc}.
In graph-based methods, 
it is easier to formally study diverse clustering-related tasks with the 
help of graph theory \cite{SpectralGraphTheory}.

Given $n$ data points,
the graph-based clustering methods suffer the severe inefficiency in practice
due to the construction of graphs, which usually needs $\mathcal O(n^2)$ time without 
any acceleration tricks. 
For some methods such as spectral clustering \cite{SC}, 
the eigenvalue decomposition, an $\mathcal O(n^3)$ operation, is required.
To alleviate the phenomenon, plenty of variants of spectral
clustering have been developed \cite{Nystrom,CSC,KASP}. 
An important variant is to introduce some representative points \cite{LargeGraph} 
to accelerate both graph construction and optimization \cite{LSC,SNC}. 
Some works \cite{MultiView-Anchor-1,MultiView-Anchor-2} recently attempt to extend these methods into 
multi-view scenarios. 
However, the graph constructed by original features may \textbf{only contain 
low-level information} 
(\textit{i.e.}, the representative capacity is limited), 
since the construction is usually based on 
the original features without mapping (\textit{e.g.}, neural networks). 
The limited ability to extract the high-level relationship of data 
results in the inaccurate graphs and thus becomes the bottleneck of 
graph-based clustering methods. 

With the rise of deep learning, the utilization of 
deep neural networks (\textit{DNN}) is widely regarded as an effective way 
to exploit deep information of data via extracting latent features \cite{DEC}. 
Some models \cite{JULE,DEPICT,DualAE} based on convolution neural networks (\textit{CNN}) 
have achieved impressive improvement on clustering, but \textbf{CNN can not be 
applied to non-Euclidean data} \cite{EuclideanData}, such as social networks, recommendation systems, 
word vector, data mining data, \textit{etc}. 
Compared with the Euclidean data 
(\textit{e.g.}, images, videos, \textit{etc.}), 
the most difference of non-Euclidean data is that the relationships of 
features are not regular (or even do not exist).
It indicates that the classical convolution operation could not be utilized   
and the existing CNN-based clustering models can not 
be applied to general clustering scenes. 
Although some deep clustering models \cite{IMSAT,EGAE,StructAE,JULE,DEPICT} have been proposed, 
most methods aim to introduce some widely used modules from supervised 
scenarios to clustering and few models focus on how to utilize 
deep models to further promote the capacity graph-based methods.
SpectralNet \cite{SpectralNet} is an important attempt 
to connect neural networks and classical spectral clustering. 
It utilizes neural networks to promote capacity and simulate spectral decomposition. 

\begin{figure*}[t]
    \centering
    \includegraphics[width=0.95\linewidth]{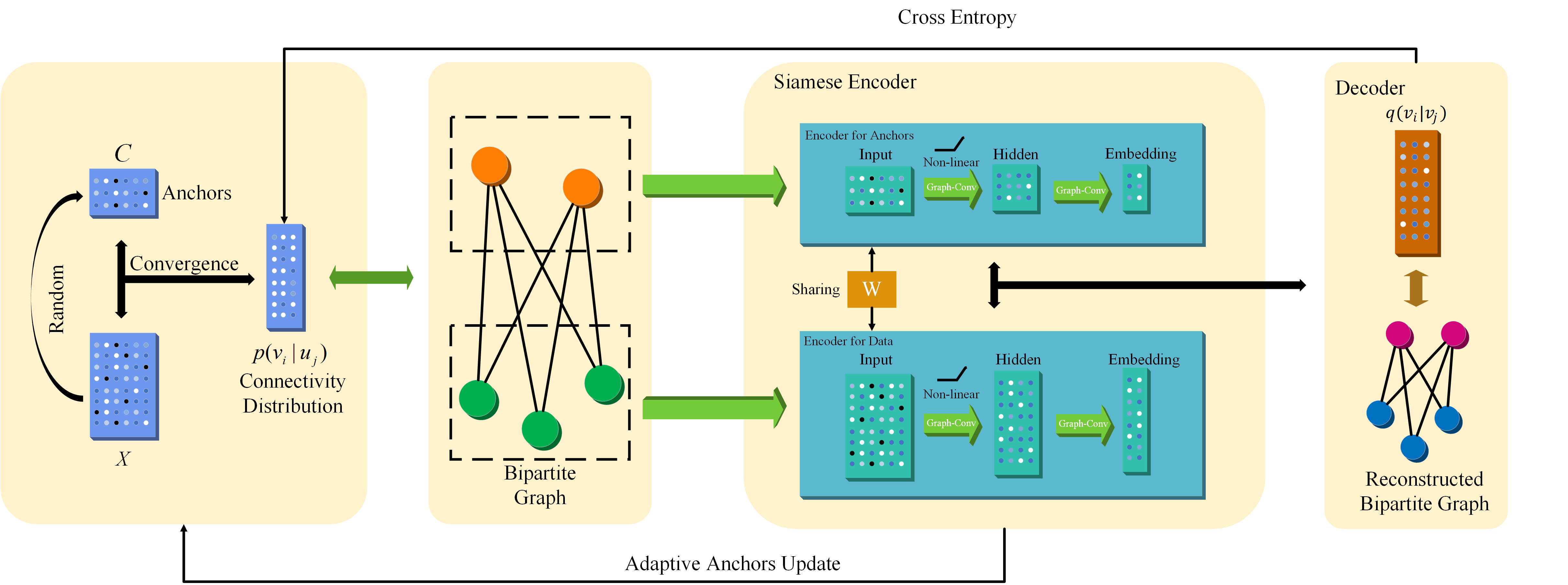}
    \caption{The core idea of AnchorGAE. First, we iteratively calculate the 
                connectivity distribution and anchors. 
                Then they are used as the input of a siamese GNN for acceleration. 
                After training the network, the 
                anchors and connectivity distribution are updated according to the embedding. 
                The two steps are repeated multiple
                times until the high-level information is exploited.}
    \label{figure_framework}
\end{figure*}

Graph neural networks (\textit{GNNs}) \cite{GNN,spectralgcn,GCN,GAT,STGNN-1,GNN-Survey}, 
which attempt to extend the classical neural networks into graph-type data 
with considering the graph structure, have attracted much attention. 
GNN is designed for graph-type data, such as social networks, recommendation 
systems, citation networks, \textit{etc}. 
In these practical scenarios, links are easy and cheap to obtain and 
graphs are provided as prior information. 
For example, in citation networks, each paper is viewed as a node in graph 
and the citation relationships, which are easily available, are modeled as links. 
As GNNs have shown impressive capacity and performance on graph-type data, 
\textbf{it is attractive to employ GNN to promote the capacity of graph-based clustering methods}. 
Nevertheless, there are two major hurdles. 
On the one hand, in the most scenarios (\textit{e.g.}, text, signals, \textit{etc.}), 
\textbf{graphs are not provided as the prior information}, 
which indicates that the existing GNN-based models 
for node clustering cannot be applied on these non-graph data 
without artificial construction of graphs.
It goes back to the same dilemma as the graph-based clustering, 
\textit{i.e.}, how to construct a graph containing enough information. 
On the other hand, \textbf{the inefficiency} is also a well-known problem for GNNs \cite{FastGCN,GraphSAGE}. 
Forward propagation of each node in a GNN layer relies on neighboring samples 
such that the computation complexity of the graph convolution increases 
exponentially with the growth of layers.
For a dense graph, the complexity of a multi-layer GNN is 
almost $\mathcal O(n^2)$ time.
For a sparse one, the computational cost becomes $\mathcal O(|\mathcal{E}|)$ 
where $|\mathcal{E}|$ represents the amount of edges. 
Accordingly, each training step approximately depends on all data points  
and the training is quite time-consuming. 
It also results in the unavailability of batch gradient descent. 
To accelerate GNNs, some works 
focus on how to apply batch gradient descent \cite{ClusterGCN,SGC,StoGCN,FastGCN}. 
The inefficiency problem is also identical to the graph-based clustering. 

The main motivation of this paper is 
\textit{how to \textbf{efficiently} improve the capacity of graph-based clustering via GNNs on \textbf{non-graph} data}. 
The efficiency guarantees the practicability and the non-graph property of data ensures the universality of our models. 
Although SDCN \cite{SDCN} aims to extend GNNs into the data clustering 
task, it simply constructs KNN graphs and thus fails to exploit the high-level information of data. 
The major contributions of the proposed model, namely \textit{\textbf{AnchorGAE}}, are listed as follows
\begin{itemize}
    \item As no graph is provided in general, we aim to construct the graph
            via estimating the connectivity distribution of each sample 
            from a perspective of generative models. 
            Under this generative perspective, anchors are introduced to accelerate 
            the graph construction and computation of GNNs.
            The produced anchors and connectivity distributions lead to 
            a bipartite graph convolution operation 
            so that the computational complexity of each forward 
            propagation of GNNs is reduced from 
            $\mathcal O(n^2)$ or $\mathcal O(|\mathcal{E}|)$ to $\mathcal O(n)$. 
    \item   With the help of Markov process and the reconstruction goal provided 
            by generative graph models, 
            a siamese architecture is naturally designed to transform 
            anchors into the embedding space.
    \item   We theoretically prove that a collapse, 
            where points are split into plenty of tiny groups 
            and these groups scatter randomly, 
            may happen due to the self-supervised update 
            of the graph (and connectivity distributions), 
            especially when the model is well-trained. 
            We therefore devise a specific strategy to update the graph 
            to prevent the collapse.
            The experiments also support the theoretical analysis and verify the effectiveness of the designed strategy.
\end{itemize}

\section{Preliminary and Related Work}
In this section, we introduce the existing work about graph neural 
networks and fast clustering algorithms. 

\subsection{Preliminary}
Let $\mathcal{G} = (\mathcal{V}, \mathcal{E}, \mathcal{W})$ be a graph where 
$\mathcal{V}$, $\mathcal{E}$, and $\mathcal{W}$ represent nodes, edges, 
and weights respectively. For an unweighted matrix, $\mathcal{W}$ can be 
ignored since the weights of edges are all regarded as 1. 
All matrices and vectors are represented by upper-case letters and 
lower-case letters in boldface, respectively.
$|\cdot|$ represents the size of a set. 
$(\cdot)_+ = \max(\cdot, 0)$.
For a real number $x$, $\lfloor x \rfloor$ represents the maximum integer 
that is not larger than $x$. 
$\textbf{1}_n$ denotes an $n$-dimension vector whose all entries equal 1.
In computer science, the adjacency matrix, which is denoted by $\bm A$ in this paper, 
is a common way to represent the edges of a graph.
Specifically, for an unweighted graph 
$\mathcal{G} = (\mathcal{V}, \mathcal{E})$,
$A_{ij} = 1$ if $\langle v_i, v_j \rangle \in \mathcal{E}$ and 
$A_{ij} = 0$ otherwise.
A popular model of GNN is the graph convolution network (\textit{GCN}) \cite{GCN} 
where the graph convolution layer is accordingly formulated 
as \cite{GCN} 
\begin{equation} \label{eq_gcn}
    \bm H = \varphi(\bm L \bm X \bm W),
\end{equation}
where $\varphi(\cdot)$ is an activation function, 
$\bm X \in \mathbb{R}^{n \times d}$ is the representation of data points,
$\bm W \in \mathbb{R}^{d \times d'}$ denotes the coefficients to learn, 
$\bm L = \bm D^{-\frac{1}{2}} (\bm A + \bm I) \bm D^{-\frac{1}{2}}$ is often viewed 
as a variant of the normalized Laplacian matrix, 
and $\bm D$ is the degree matrix of $(\bm A + \bm I)$. 
Specifically, $\bm D$ is a diagonal matrix where $D_{ii} = \sum _{j=1}^n (A_{ij} + 1)$.

\subsection{Graph Neural Networks}
In recent years, how to effectively extract features of graphs has attracted 
a lot of attention \cite{GCN,GNN-Survey}. 
Inspired by CNN, a few works \cite{spectralgcn,ChebNet,GCN} attempt to investigate convolution operation 
on graph-type data. 
With the rise of the attention mechanism, GAT \cite{GAT,AttentionGAE,GAT-2021} introduces self-attention 
into graph neural networks. 
Meanwhile, GNN can be applied to sequential tasks \cite{STGNN-0,STGNN-1}, which is known 
as spatial-temporal graph neural network (\textit{STGNN}). 
Some theoretical works \cite{GIN,GraphNorm,ImplicitAccGNN,Exponential,BenefitsOfDepth,OverSquashing} are also investigated in recent years. 
Some works \cite{BenefitsOfDepth} argue the impact of over-smoothing while 
literature \cite{OverSquashing} proposes the over-squashing. 
To extend GCN \cite{GCN} into unsupervised learning, graph auto-encoders \cite{GAE,AGAE,GALA,AttributedGAE,AdaGAE}, 
based on auto-encoder \cite{AE,VAE}, are developed.
Different from traditional auto-encoders, GCN \cite{GCN} is used as an encoder 
and the decoder calculates inner-products of embedding to reconstruct graphs. 
To build a symmetric architecture, GALA \cite{GALA} designs a sharpening operator 
since the graph convolution is equivalent to a smooth operator. 
SDCN \cite{SDCN} has tried to utilize GNN for clustering, while 
it just constructs a KNN graph as the preprocessing and suffers from the inefficiency problem as well. 
Although GNN has achieved outstanding performance \cite{GCN,ContrastiveGCN}, it requires too much 
memory and time. 
Some methods \cite{GraphSAGE,ClusterGCN,StoGCN,SGC,S2GC} for accelerating training and reducing consumption of memory are 
proposed. For instance, SGC \cite{SGC} attempts to accelerate by 
removing non-linear activations in GCN, which reduces the number of parameters.
More importantly, it compresses multiple layers into one layer such that 
stochastic gradient descent can be used. 
GraphSAGE \cite{GraphSAGE} attempts to sample part of nodes for each node to reduce the computational complexity, 
while FastGCN \cite{FastGCN} samples nodes independently. 
ClusterGCN \cite{ClusterGCN} tries to find an approximate graph 
with multiple connected components via METIS \cite{METIS} to employ batch gradient descent.

\subsection{Clustering on Large Scale Data}
In traditional clustering, $k$-means \cite{KMeans} works efficiently on large scale data 
but it can only deal with spherical data, which limits the application of 
$k$-means. 
By virtue of deep learning, deep clustering \cite{IMSAT,DEC,EGAE,JULE,DEPICT,ContrastiveClustering} often aims to 
learn expressive representations and then feed the representations 
into the simple clustering module. 
The whole optimization is based on batch gradient descent so that 
they are still available on large scale data. 
However, they frequently suffers from the over-fitting and the performance 
heavily relies on the tuning of neural networks. 
For graph-based clustering methods \cite{NormalizedCut,RatioCut,SC}, 
both computational complexity and space 
complexity of graph construction are $\mathcal O(n^2)$.
In particular, solving spectral clustering needs eigenvalue 
decomposition, which is also time-consuming. 
To apply spectral clustering on large scale data, a lot of 
variants \cite{LargeGraph,Nystrom,CSC,KASP} have been developed.
Anchor-based methods \cite{LargeGraph,LSC,SNC} are important extensions of 
spectral clustering.
The core idea is to construct a graph implicitly via some 
representative points, which are usually named as anchors.
By constructing a bipartite graph between samples and anchors, 
they only need to perform singular value decomposition on 
the Laplacian of the bipartite graph \cite{LSC}.
The anchor-based spectral clustering provides an elegant scheme 
to improve the efficiency of graph-based models.

\begin{myRemark}
Although several CNN-based clustering methods \cite{DEPICT,JULE,DualAE}
have achieved good performance, they focus on vision data and 
the main improvement is caused by the classical convolution operator, 
so that they could not be applied to non-Euclidean data, \textit{e.g.},
word vectors, data mining data, \textit{etc}.
Therefore, the CNN-based methods will not be used as the main competitors in this paper.
It should be emphasized that CNN-based methods and AnchorGAE should not be 
mutually exclusive. 
AnchorGAE aims to find a general paradigm to improve the clustering 
baseline under the most general settings. 
It may be an attractive topic in the future to extend the core idea of AnchorGAE 
into the vision-specific clustering by combining CNNs. 
\end{myRemark}

\section{Methodology}
In this section, we first revisit the main problems that we attempt to address in this paper.
In Section \ref{section_generative}, we show how to construct a bipartite graph on the non-graph data from a generative perspective. 
The probability perspective is vital to the derivation of the fast bipartite graph 
convolution and the reconstruction goal of the unsupervised model.
In Section \ref{section_bipartite_conv}, the core idea of AnchorGAE and its efficiency are elaborated. 
In Section \ref{section_siamese}, we will find that a siamese structure is naturally induced by the bipartite graph convolution. 
The rigorous analysis of the degeneration caused by the simple update of graphs is provided in Section \ref{section_analysis}. 
Finally, the methods to obtain the final clustering assignments are 
discussed. 
The core idea of AnchorGAE is illustrated in Figure \ref{figure_framework}.

\subsection{Revisit the Critical Problems}
Before elaborating on \textit{how to efficiently improve the capacity of 
graph-based clustering via GNNs on non-graph data}, 
we introduce the major hurdles respectively. 

\textit{\textbf{No Prior Graph:}}
In the scenarios of clustering, only features are given as the prior information. 
The graph is usually constructed as the preprocessing in graph-based clustering methods. 
The key of performance is the quality of the constructed graph. 
When the scattering of data is too complicated to precisely capture the 
key information based on the original data 
(which is the main bottleneck of the most existing graph-based clustering methods), 
it is a rational scheme to map the data into a latent space and then 
build a more precise graph in this space. 
As we focus on graph-based models, GNNs are more rational schemes compared with 
classical neural networks. 
Moreover, the utilization of the estimated graph (via GNNs) to update graph 
leads to a self-supervised procedure. 
The graph construction usually results in $\mathcal{O}(n^2)$ costs, 
which also causes the inefficiency of graph-based clustering methods. 
The scheme to solve this problem will be elaborated in Section \ref{section_generative}.

\textit{\textbf{Inefficiency of GNNs:}}
In the forward propagation of GNNs, it usually require neighbored nodes 
to compute the hidden feature of one node. 
This computation results in the inefficiency of GNNs on large graph \cite{FastGNN-Survey}, 
especially with the increase of depth.
In this paper, we simply focus on GCN, the widely used variant of GNN, 
since the theoretical analysis does not depend on specific GNN modules.
For instance, it requires $\mathcal O(|\mathcal{E}| d)$ time to compute 
$\bm L \bm X$ in the forward propagation. If $\mathcal{G}$ is sparse enough, the computational complexity 
is acceptable. If there are plenty of edges in $\mathcal{G}$, the 
computational complexity can be roughly regarded as $\mathcal O(n^2 d)$, so that 
GCNs can not be applied in this case. 
We will show how to alleviate it into $\mathcal{O}(n)$ 
in Sections \ref{section_bipartite_conv} and \ref{section_siamese}. 

\textit{\textbf{Inefficiency of Graph-Based Clustering Methods:}} 
Even after the $\mathcal{O}(n^2)$ graph construction, the computation complexity 
reaches $\mathcal{O}(n^3)$ due to the eigenvalue decomposition. 
The problem is easy to address after introducing the $\mathcal{O}(n)$ bipartite 
graph convolution and the details can be found in Section \ref{section_fast_clustering}.

\subsection{Generative Perspective for Bipartite Graphs} \label{section_generative}
By virtue of the generative perspective introduced in this section, the probability transition matrix 
is accordingly built so that the Markov process can be used to implement 
the efficient bipartite graph convolution, which is elaborated in 
Section \ref{section_bipartite_conv}. 
The generative perspective also provides a training goal, 
reconstructing the distribution, for AnchorGAE.

\subsubsection{Generative Perspective for Graphs}
For a graph $\mathcal{G} = (\mathcal{V}, \mathcal{E})$, an edge, 
$\langle v_i, v_j \rangle \in \mathcal{E}$, can be regarded as a sampling from 
an underlying connectivity distribution $p(\cdot | v_i)$ for $v_i$, 
where $v_i$ denotes the $i$-th node. 
Note that $\sum _{j=1}^{n} p(v_j | v_i) = 1$ where $n = |\mathcal{V}|$.
Accordingly, \textit{a graph can be viewed as multiple samplings regarding 
connectivity distributions $\{p(\cdot | v_i)\}_{i=1}^{n}$}.
Before the further discussion of weighted graphs, 
we further clarify the difference between the generative perspective and discriminative perspective. 
In the discriminative view, the existence of edge is certain and the model should 
train a classifier to predict $p(\langle v_i, v_j \rangle \in \mathcal{E})$ 
and $p(\langle v_i, v_j \rangle \notin \mathcal{E})$. 
Note that $\sum _{j} p(\langle v_i, v_j \rangle \in \mathcal{E}) \neq 1$ in most cases. 
In this paper, the generative view provides an elegant Bayesian framework  
to simultaneously capture the local topology and offer a reconstruction goal for GAE. 

With the generative perspective, for a weighted graph 
$\mathcal{G} = (\mathcal{V}, \mathcal{E}, \mathcal{W})$ 
where $\mathcal{W}$ represents the weights on edges,
$\mathcal{W}$ can be viewed as an estimation of 
the underlying connectivity distributions.
From the generative perspective, the underlying connectivity distributions 
can be supposed to exist in any dataset, 
while the connectivity distributions are agnostic on non-graph datasets.
However, no matter which type of data is, it will not affect the existence of connectivity distributions. 
The vital idea that leads AnchorGAE to be applied to non-graph data is to 
estimate the distributions. 
Besides, they are also the target to reconstruct by our unsupervised model.
The following assumption helps to estimate the connectivity distributions. 
\begin{myAssumption} \label{assumption_generative}
    Given an ideal metric of difference between two nodes $d(v_i, v_j)$, 
    $p(v_j | v_i)$ is negatively related to $d(v_i, v_j)$, \textit{i.e.}, 
    $p(v_j | v_i) \varpropto -d(v_i, v_j)$.
    An ideal metric indicates $\forall d(v_i, v_j) \leq d(v_i, v_l), p(v_j | v_i) \geq p(v_l | v_i)$. 
\end{myAssumption}

In this paper, each node is described by a $d$-dimension vector, \textit{i.e.},
$\bm x_i \in \mathbb{R}^{d}$. Therefore, the metric can be reformulated as 
\begin{equation} \label{eq_metric}
    d(v_i, v_j) = \|f(\bm x_i) - f(\bm x_j)\|_2^2 ,
\end{equation}
where $f(\cdot)$ represents an ideal mapping of the raw features. 
The implementation is discussed in Section \ref{section_bipartite_conv}. 

\begin{myRemark}
    Although the above assumption seems too strong to hold on raw data representation, 
    the goal of our model is to find an approximately ideal mapping function $f(\cdot)$ 
    to build a well-structured graph. 
\end{myRemark}

\subsubsection{Bipartite Graph Simplification with Anchors}
To efficiently apply graph convolution, 
we attempt to simplify the graph via some representative points, which are 
named \textit{anchors} or \textit{landmarks}. 
Let $\mathcal{U}$ denote anchors where $|\mathcal{U}| = m$ and the $i$-th 
anchor is also described by a $d$-dimension vector $\bm c_i$. 
Then, we can get a new \textit{bipartite} graph with anchors, 
$\mathcal{G}_a = (\mathcal{V}_a, \mathcal{E}_a, \mathcal{W}_a)$.
Specifically, $\mathcal{V}_a = \mathcal{V} \cup \mathcal{U}$ and 
$\mathcal{W}_a$ is the corresponding weights of $\mathcal{V}_a$.
The bipartite property means that \textbf{only edges between $\mathcal{V}$ and 
$\mathcal{U}$ are allowed in $\mathcal{E}_a$}. 
This bipartite property is from the following assumption.
\begin{myAssumption}
    Given an ideal set of anchors $\mathcal{U}$, $p(v_j | v_i)$ can be constructed by $\{p(u_l | v_i)\}_{l=1}^m$ and 
    $\{p(v_j | u_l)\}_{l=1}^m$.
\end{myAssumption}

Since both connectivity distributions and anchors are unknown in 
general data, we need to estimate them.
First, we intend to estimate anchors and $p(u_j | v_i)$ alternatively by 
\begin{equation}
    \min \limits_{\mathcal{U}, p(\cdot | v_i)} \sum \limits_{i=1}^n \mathbb{E}_{u_j \sim p( \cdot | v_i)} d(v_i, u_j) ,
\end{equation}
according to Assumption \ref{assumption_generative}. 
Specifically speaking, in the ideal metric space introduced by Assumption \ref{assumption_generative}, 
the expected divergence between $\mathcal{V}$ and $\mathcal{U}$ should be minimum. 
Note that we only focus on the estimation of $p(u_j | v_i)$ and overlook 
$p(v_i | u_j)$ for now. 
However, the above problem has a trivial solution, \textit{i.e.}, 
\begin{equation}
    p_\dag(u_j | v_i) = 
    \left \{ 
    \begin{array}{l l}
        1, & j = \arg \min_j (d(v_i, u_j)), \\
        0, & \text{else} .
    \end{array}
    \right .
\end{equation}
It indicates that each original node (from $\mathcal{V}$) only connects with 
its nearest anchors (from $\mathcal{U}$). 
To avoid the ill connectivity distributions and anchor graph, we turn to the following regularized problem,
\begin{equation}
    \min \limits_{\mathcal{U}, p(\cdot | v_i)} \sum \limits_{i=1}^n \mathbb{E}_{u_j \sim p( \cdot | v_i)} d(v_i, u_j) 
    + \ell(p(\cdot | v_i), \pi(\cdot | v_i)),
\end{equation}
where $\pi(\cdot | v_i)$ represents the uniform distribution and $\ell(\cdot, \cdot)$
is a metric of two discrete distributions. 
Intuitively, $p(u_j | v_i)$ should be sparse such that anchors far 
from $v_i$ are ignored. 
In practice, Kullback-Leibler divergence (\textit{KL}-divergence) is usually used as 
$\ell(\cdot, \cdot)$ to measure the divergence of two probability distributions.
However, it will return a \textit{dense} solution of $p(\cdot | v_i)$. 
Instead, we employ a simple measure as 
\begin{equation} \label{obj_anchor}
    \min \limits_{\mathcal{U}, p(\cdot | v_i)} \sum \limits_{i=1}^n \mathbb{E}_{u_j \sim p( \cdot | v_i)} d(v_i, u_j) 
    + \gamma_i \sum \limits_{j=1}^m (p(u_j | v_i) - \pi(u_j | v_i))^2 .
\end{equation}
Let $d_{k} (v_i, \cdot)$ be the $k$-th smallest value of $\{d(v_i, u_j)\}_{j=1}^m$, and if
$\gamma_i = \frac{1}{2} (k d_{k+1}(v_i, \cdot) - \sum _{l=1}^k d_l (v_i, \cdot))$, 
$p(\cdot|v_i)$ will be $k$-sparse (only $k$ entries are non-zero). 
Meanwhile, it can be transformed into \cite{CAN} and solved by
\begin{equation} \label{eq_p}
    p(u_j | v_i) = (\frac{d_{k+1}(v_i, \cdot) - d(v_i, u_j)}{\sum _{l=1}^k d_{k+1}(v_i, \cdot) - d_l(v_i, \cdot)} )_+ .
\end{equation}
In other words, the hyper-parameter $\gamma_i$ is converted 
to the sparsity $k$, which is much easier to tune.

\begin{algorithm}[t]
    \centering
    \caption{Iterative update of the connectivity distribution and anchors, \textit{i.e.,}, optimization of problem (\ref{obj_anchor}).}
    \label{alg_anchor}
    \begin{algorithmic}
        \REQUIRE $\{f(\bm x_i)\}_{i=1}^n$, $\{f^{(0)}(\bm c_j)\}_{j=1}^m$, sparsity $k$.
        \STATE $t = 0$.
        \STATE $d^{(0)}(v_i, u_j) = \|f(\bm x_i) - f^{(0)}(\bm u_j)\|_2^2$.
        \REPEAT
        \STATE $t = t + 1$.
        \STATE $p^{(t)}(u_j | v_i) = (\frac{d^{(t-1)}_{k+1}(v_i, \cdot) - d^{(t-1)}(v_i, u_j)}{\sum _{l=1}^k d^{(t-1)}_{k+1}(v_i, \cdot) - d_l^{(t-1)}(v_i, \cdot)} )_+$.
        \STATE $f^{(t)}(\bm c_j) = \frac{\sum _{i=1}^n p^{(t)}(u_j | v_i) f(\bm x_i)}{\sum _{i=1}^n p^{(t)}(u_j | v_i)}$.
        \STATE $d^{(t)}(v_i, u_j) = \|f(\bm x_i) - f^{(t)}(\bm c_j)\|_2^2.$
        \UNTIL{convergence .}
        \STATE $f(\bm c_j) = f^{(t)}(\bm c_j)$, $p(\cdot | v_i) = p^{(t)}(\cdot | v_i)$. 
        \ENSURE $f(\bm c_j)$ and $p(\cdot | v_i)$ .
    \end{algorithmic}
\end{algorithm}

When $p(u_j | v_i)$ is fixed, anchors are computed by solving the following problem,
\begin{equation}
    \min \limits_{f(\bm c_j)} \mathbb E_{u_j \sim p(\cdot | v_i)} \|f(\bm x_i) - f(\bm c_j)\|_2^2 ,
\end{equation}
where $\bm c_j$ is representation of anchor $u_j$ in the original space. 
If we take the derivative of the above equation regarding $f(\bm c_j)$, 
\begin{equation}
    \begin{split}
                & \nabla_{f(\bm c_j)} \mathbb{E}_{u_j \sim p(\cdot | v_i)} \|f(\bm x_i) - f(\bm c_j)\|_2^2 \\
                = ~ & 2(\sum \limits_{i=1}^m p(u_j | v_i) f(\bm c_j) - p(u_j | v_i) f(\bm x_i) ) ,
    \end{split}
\end{equation}
and set it to 0, then we have
\begin{equation} \label{eq_anchor}
        f(\bm c_j) = \frac{\sum _{i=1}^n p(u_j | v_i) f(\bm x_i)}{\sum _{i=1}^n p(u_j | v_i)} .
\end{equation}
In sum, problem (\ref{obj_anchor}) can be solved iteratively 
through Eq. (\ref{eq_p}) and (\ref{eq_anchor}), 
which is summarized as Algorithm \ref{alg_anchor}. 

\textit{\textbf{Build A Generative Bipartite Graph by $\{p(u_j|v_i)\}_{i,j}$ and $\{f(\bm c_j)\}_j$}}:
Then we turn to model $p(v_i | u_j)$, 
which is used to compute $p(v_j | v_i)$, 
the probability between two raw nodes. 
Rather than solving a problem like problem (\ref{obj_anchor}), 
$p(v_i | u_j)$ is set by a simple normalized step, 
\begin{equation} \label{eq_p_vu}
    p(v_i | u_j) = \frac{p(u_j | v_i)}{\sum _{l=1}^{n} p(u_j | v_l)} ,
\end{equation}
via utilizing $p(u_j | v_i)$ calculated by Eq. (\ref{eq_p}). 
To show how the above formulations construct a bipartite graph and 
simplify the succeeding discussion about the fast convolution, 
we further reformulate $p(u_j | v_i)$ and $p(v_i | u_j)$ 
by matrix form. 
Let $\bm B \in \mathbb{R}^{n \times m}$ be a matrix where $b_{ij} = p(u_j | v_i)$.
Then 
\begin{equation}\label{eq_transfer_prob}
    \bm T = 
    \left [
    \begin{array}{c c}
        \bm 0 & \bm B \\
        \bm B^T & \bm 0 
    \end{array}
    \right ]
    ~ \text{and} ~
    \bm D_a = 
    \left [
    \begin{array}{c c}
        \bm I & \bm 0 \\
        \bm 0 & \bm \Delta \\
    \end{array}
    \right ] 
\end{equation}
represent an \textit{unnormalized} bipartite graph and its degree matrix, 
respectively. 
Note that $\bm T$ does not conform to the generative perspective for graphs, 
though it is a bipartite graph.
According to Eq. (\ref{eq_p_vu}), the \textit{generative bipartite graph} 
can be defined as 
\begin{equation}
    \bm P = \bm D_a^{-1} \bm T = 
    \left [
    \begin{array}{c c}
        \bm 0 & \bm B \\
        \bm \Delta^{-1} \bm B^T & \bm 0 
    \end{array}
    \right ]
    \in \mathbb{R}^{(n + m) \times (n + m)} ,
\end{equation}
which can be also regarded as a probability transferring matrix. 

\begin{algorithm}[t]
    \centering
    \caption{Optimization of AnchorGAE.}
    \label{alg_anchor_gae}
    \begin{algorithmic}[1]
        \REQUIRE Raw features $\bm X$, initial sparsity $k_0$, 
        the estimated upper-bound of sparsity $k_m$, maximum epochs $E$. 
        \STATE Initialize $\bm B$ and $\bm C$ via solving problem (\ref{obj_anchor}).
        \STATE Initialize $\bm A_a$ by Eq. (\ref{eq_adjacency}).
        \STATE $k \leftarrow k_0$.
        \FOR {$i = 1, 2, \cdots, E$}{
        \STATE \textit{$\#$ Obtain the mapping $f(\cdot)$.}
        \STATE Update parameters of GCN layers by gradient descent. 
        \STATE Update $B$ and $f(\bm c_j)$ on $\bm Z$ and $\bm Z_t$ via Algorithm \ref{alg_anchor}.
        \STATE Update anchors in the original space for the bipartite graph convolution: $\bm c_j = f^{-1}(f(\bm c_j)) = \bm X^T \bm b_j$.
        \STATE Update $\bm A_a$ by Eq. (\ref{eq_adjacency}). 
        \STATE $k \leftarrow k + \Delta k$.
        }\ENDFOR
        \STATE Perform fast clustering on $\bm Z$.
        \ENSURE Clustering assignments and embedding $\bm Z$.
    \end{algorithmic}
\end{algorithm}

\subsection{Efficient $\mathcal O(n)$ Bipartite Graph Convolution} \label{section_bipartite_conv}

As the goal is to capture the structure information, 
\textbf{a multi-layer GCN is a rational scheme to implement $f(\bm x_i)$} 
which is defined in Eq. (\ref{eq_metric}). 
In this subsection, we will show why the bipartite graph convolution 
accelerates GCN. 

We utilize the bipartite graph, defined in Eq. (\ref{eq_transfer_prob}), 
to accelerate the graph convolution.
In virtue of the Markov process \cite{RandomWalk}, $p(v_j | v_i)$ can be 
obtained by the one-step transition probability, 
\textit{i.e.}, 
\begin{equation}
p(v_j | v_i) = \sum _{l=1}^{m}p(v_j | u_l) p(u_l | v_i). 
\end{equation}
Similarly, $p(u_j | u_i) = \sum _{l=1}^{n}p(u_j | v_l) p(v_l | u_i)$. 
Formally, the constructed graphs of $\mathcal{V}$ and $\mathcal{U}$ are 
defined as 
\begin{equation} \label{eq_adjacency}
    \bm A_a = \bm P^2 = 
    \left [
    \begin{array}{c c}
        \bm B \bm \Delta^{-1} \bm B^T & \bm 0 \\
        \bm 0 & \bm \Delta^{-1} \bm B^T \bm B 
    \end{array}
    \right ] = 
    \left [
    \begin{array}{c c}
        \bm A & \bm 0 \\
        \bm 0 & \bm A_t
    \end{array}
    \right ]
    .
\end{equation}
Accordingly, $\bm A$ is the weighted adjacency matrix, constructed by anchors, 
of $n$ samples, and $\bm A_t$ is the weighted adjacency of $m$ anchors. 
\begin{myRemark}
    In GCNs, the graph is conventionally preprocessed by adding self-loops as 
    shown in Eq. (\ref{eq_gcn}). The graph constructed by Eq. 
    (\ref{eq_adjacency}) is equivalent to be assigned with adaptive self-loops, 
    since $\bm A_{ii}$ and $(\bm A_t)_{ii}$ are non-zero in most cases. 
\end{myRemark}

Therefore, $\bm A$ can be directly used as a part of the graph convolution operator. 
Since 
\begin{equation}
    \bm A \textbf{1} = \bm B \bm \Delta^{-1} \bm B^T \textbf{1} = \bm B \textbf{1} = \textbf{1},
\end{equation}
the degree matrix of $\bm A$ is the identity matrix, \textit{i.e.}, 
$\bm L = \bm A$. 
Therefore, the output of GCN is formulated as 
\begin{equation} \label{eq_anchor_conv}
    \bm H = \varphi(\bm B \bm \Delta^{-1} \bm B^T \bm X \bm W) .
\end{equation}
Here, we explain why the above equation accelerates the operation. It should 
be emphasized that $\bm A$ is not required to be computed explicitly. The core 
is to \textbf{rearrange the order of matrix multiplications} as follows
\begin{equation}
    \begin{split}
    & \bm T_1 = \bm B^T \bm X \\
    \Rightarrow & ~ \bm T_2 = \bm \Delta^{-1} \bm T_1 \\
    \Rightarrow & ~ \bm T_3 = \bm T_2 \bm W \\
    \Rightarrow & ~ \bm H = \varphi(\bm B \bm T_3) .
    \end{split}
\end{equation}
According to the above order of computation, the computational complexity 
is reduced to $\mathcal O(nmd + m^2 d + m d d' + nmd')$. Since $d$, $d'$, and $m$ are usually 
much smaller than $n$, the complexity can be regarded as $\mathcal O(n m)$. 
Moreover, if the amount of anchors is small enough, the required time of 
each forward propagation of a GCN layer is $\mathcal O(n)$.

\subsection{Siamese Architecture for Anchor Mapping} \label{section_siamese}
After showing the accelerated GCN mapping based on anchors and the bipartite graph, 
we discuss more details of the whole procedure in this section, 
especially the changes of architecture brought by anchors. 

As the task is unsupervised, 
the whole architecture is based on graph auto-encoder (\textit{GAE}).
The encoder consists of multiple bipartite GCN layers defined 
in Eq. (\ref{eq_anchor_conv}), 
\begin{equation}
    \bm Z = f(\bm X) = \varphi_{L}(\bm A \varphi_{L-1}(\cdots \varphi_1(\bm A \bm X \bm W_1)\cdots) \bm W_L),
\end{equation}
where $L$ denotes the number of layers.
The embedding (denoted by $\bm Z$), which is produced by the encoder, is a non-linear mapping for the 
raw representation of $\mathcal{V}$.

However, the mentioned encoder just transforms $\mathcal{V}$ 
and there is no edge between $\mathcal{V}$ and $\mathcal{U}$. 
In other words, \textit{$\mathcal{U}$ can not be projected via the network with 
the adjacency $\bm A$}. 
Meanwhile, GCN layers suffer from the \textit{out-of-sample} problem. 
Accordingly, $\mathcal{U}$ can not be directly projected into the deep 
feature space by the encoder. 
To map anchors into the same feature space, a siamese \cite{SiameseNetwork} encoder is thus designed.
The encoder for anchors should share the same parameters but \textbf{has its own 
graph structure}. Surprisingly, the one-step probability transition matrix 
defined in Eq. (\ref{eq_adjacency}) gives us a graph that is only composed 
of anchors, $\bm A_t = \bm \Delta^{-1} \bm B^T \bm B$. As 
\begin{equation}
    \bm A_t \textbf{1} = \bm \Delta^{-1} \bm B^T \bm B \textbf{1} = \bm \Delta^{-1} \bm B^T \textbf{1} = \textbf{1},
\end{equation}
the corresponding degree matrix, $\bm D_t$, is also the identity matrix, 
\textit{i.e.}, 
\begin{equation}
    \bm L_t = \bm D_t^{-\frac{1}{2}} \bm A_t \bm D_t^{-\frac{1}{2}} = \bm A_t.
\end{equation}
Accordingly, the embedding of anchors is formulated as 
\begin{equation}
    \bm Z_t = f(\bm C) = \varphi_{L}(\bm A_t \varphi_{L-1}(\cdots \varphi_1(\bm A_t \bm C \bm W_1)\cdots) \bm W_L),
\end{equation}
where $\bm C \in \mathbb{R}^{m \times d}$ represents $m$ anchors.

\textbf{Loss}: 
As the underlying connectivity distributions have been estimated, we use them, 
rather than the adjacency in the existing GAEs,
as the target for the decoder to rebuild. 
According to Assumption \ref{assumption_generative}, the connectivity
probability should be reconstructed according to Euclidean distances between 
$\mathcal{V}$ and $\mathcal{U}$. Formally speaking, the decoder is 
\begin{equation}
    \begin{split}
        q(u_j | v_i) & = \frac{\exp(-d(v_i, u_j))}{\sum _{l=1}^m \exp(-d(v_i, u_l))} .\\
    \end{split}
\end{equation}
Instead of using mean-square error (\textit{MSE}) to reconstruct $A$, 
AnchorGAE intends to reconstruct the underlying connectivity distribution 
and employ the cross-entropy to measure the divergence,
\begin{equation}
    \mathcal{L} = \sum \limits_{i=1}^n \sum \limits_{j=1}^m p(u_j | v_i) \log \frac{1}{q(u_j | v_i)} .
\end{equation}
The GCN part of AnchorGAE is trained via minimizing $\mathcal{L}$.

So far, how to get anchors at the beginning is ignored. In deep learning, 
a common manner is to pre-train a neural network. 
Nevertheless, the pre-training of raw graph auto-encoders  
will result in nearly $\mathcal O(n^2)$ time, which violates the motivation of our model. 
Therefore, we first compute anchors and transition probability on the raw 
features. Then \textbf{anchors and transition probability will be rectified dynamically}, 
which will be elaborated in Section \ref{section_analysis}. 

\subsubsection{Why not to reconstruct features?}
One may ask why AnchorGAE just reconstruct the constructed graph rather than 
the features, which is a priori. 
The goal of the classical auto-encoder is to restore the input features. 
However, the learned representations are often not promising enough 
due to \textit{no limitation on the latent space}. 
Accordingly, several works (\textit{e.g.}, adversarial auto-encoder \cite{AdversarialAE}, 
variational auto-encoder \cite{VAE}) constrain the representations. 
More importantly, 
contrastive learning \cite{SimCLR} does not reconstruct the features anymore 
and focuses on how to pull similar samples together 
and push dissimilar samples away. 
Contrastive learning has achieved impressive performance in recent years.

In AnchorGAE, the graph is constructed on the learned representations. 
Define a mapping 
$\psi: \mathbb R^{n \times d} \rightarrow \mathbb{R}^{n \times d}$. 
Then the graph can be roughly formulated as $\bm A = \psi(f(\bm X))$ 
where $f$ denotes the GNN mapping. 
On the one hand, the composite mapping, $\psi \circ f$, 
removes the redundant information of $X$. 
\textit{The reconstruction of $\bm A$ can be regarded as the restoration 
of the simplified $\bm X$}. 
On the other hand, the edges of graphs indicate similar sample pairs and dissimilar pairs. 
\textit{The reconstruction of graph is therefore consistent with the idea of contrastive learning}. 
That's why AnchorGAE only reconstructs the constructed graph.

\begin{figure}[t]
    \small
    \centering
    \setlength{\fboxrule}{0.1pt}
    \setlength{\fboxsep}{0.5pt}
    \subcaptionbox{Collapse}{
        \fbox{
        \includegraphics[width=0.45\linewidth]{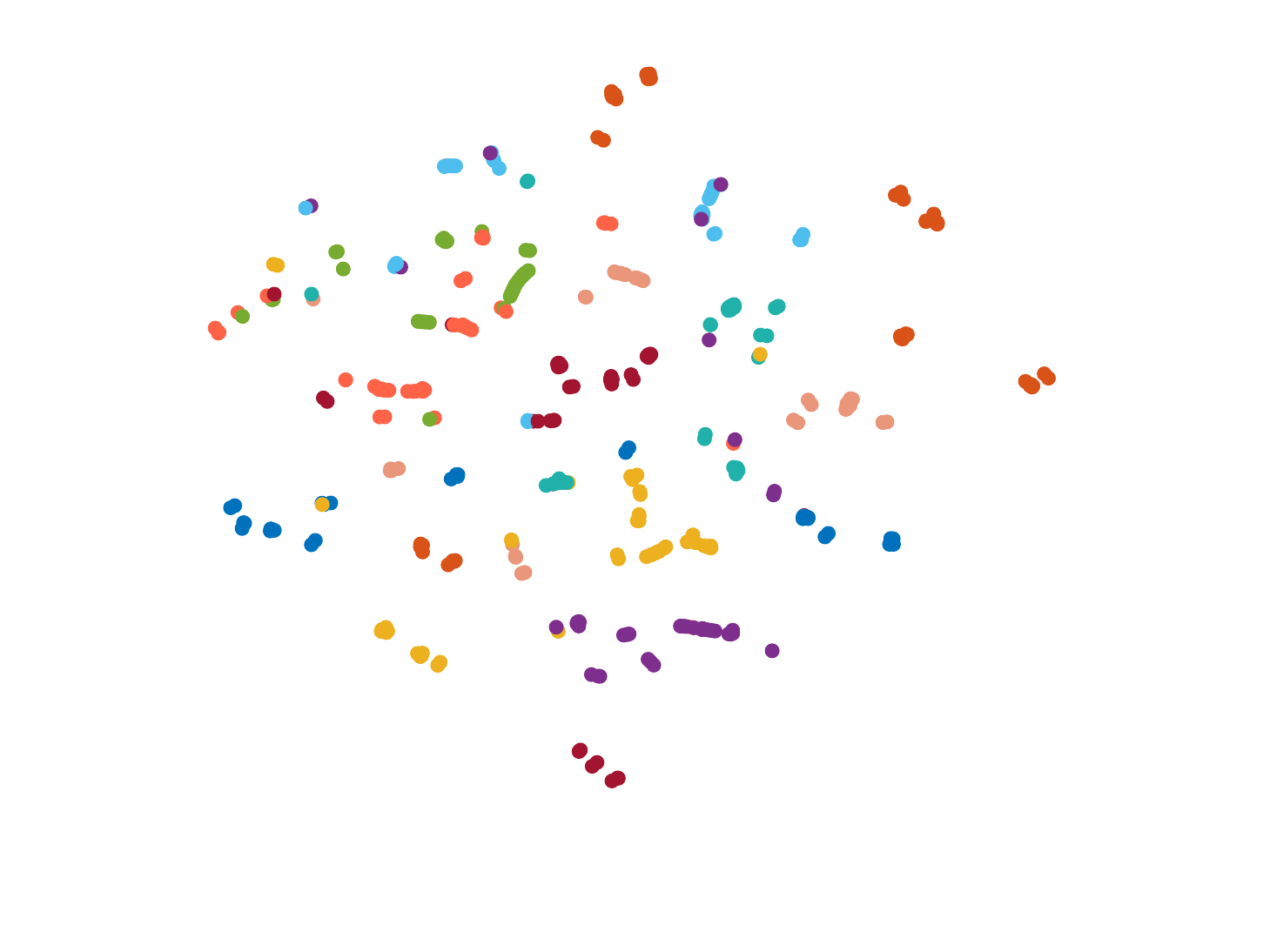}
        }
    }
    \subcaptionbox{AnchorGAE}{
        \fbox{
        \includegraphics[width=0.45\linewidth]{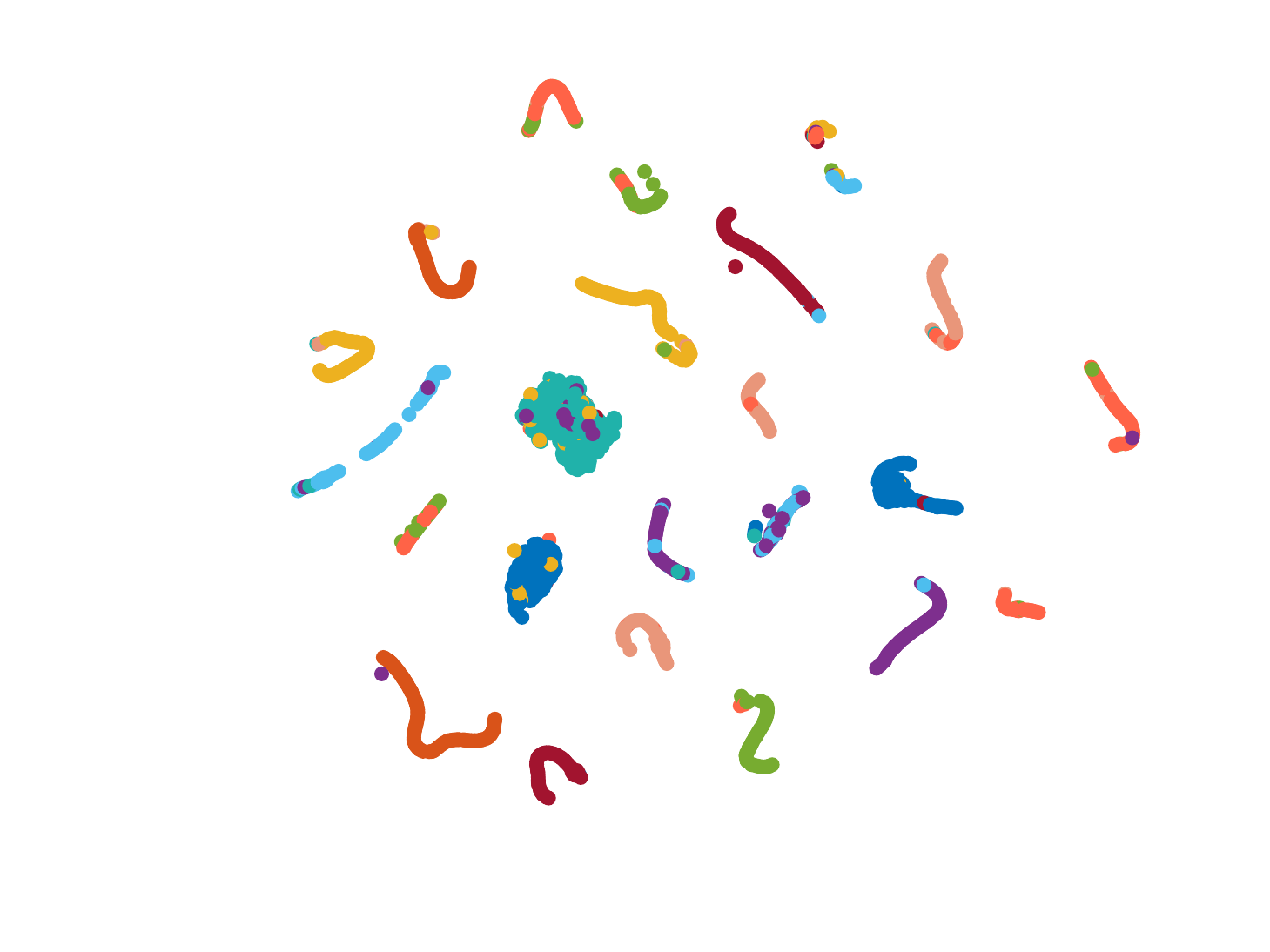}
        }
    }
    
    \caption{Illustration of the strategy to avoid degeneration on MNIST-test: 
    The left one is from AnchorGAE with fixed $k$ while the right one is 
    from AnchorGAE with dynamically increasing $k$.}
    \label{figure_collapse}
\end{figure}

\begin{table*}[t]
    \centering
    \renewcommand\arraystretch{1.2}
    \setlength{\tabcolsep}{3mm}
    \caption{Experimental Results: ACC and NMI}
    \label{table_results}
    \begin{tabular}{l c c c c c c c c c c c c c c}
    \hline
    
    \hline
        \multirow{2}{*}{Method} & \multicolumn{2}{c}{ISOLET} & \multicolumn{2}{c}{SEGMENT} & \multicolumn{2}{c}{USPS} & \multicolumn{2}{c}{MNIST-test} & \multicolumn{2}{c}{MNIST-full} & \multicolumn{2}{c}{Fashion} \\
        & ACC & NMI & ACC & NMI & ACC & NMI& ACC & NMI & ACC & NMI & ACC & NMI \\
    \hline
    \hline
        
        K-Means & 0.621 & 0.775 & 0.559 & 0.590 & 0.648 & 0.628 & 0.548 & 0.501 & 0.534 & 0.500 & 0.474 & 0.512 \\
        Nystrom & 0.648 & 0.780 & 0.562 & 0.549 & 0.668 & 0.619 & 0.575 & 0.505 & 0.589 & 0.502 & 0.563 & 0.522 \\
        CSC     & \underline{0.661} & 0.785 & \underline{0.606} & 0.555 & 0.726 & 0.696 & 0.623 & 0.572 & 0.592 & 0.569 & 0.515 & 0.501 \\
        KASP    & 0.630 & 0.754 & 0.554 & 0.544 & 0.712 & 0.693 & 0.654 & 0.600 & 0.620 & 0.555 & 0.538 & 0.529\\
        LSC     & 0.657 & 0.774 & 0.562 & 0.557 & 0.727 & 0.694 & 0.702 & 0.615 & 0.714 & 0.623 & \underline{0.634} & \underline{0.629} \\
        SNC     & 0.553 & 0.631 & 0.567 & 0.535 & 0.703 & 0.589 & 0.570 & 0.523 & --- & --- & --- & ---\\
        SGC     & 0.464 & 0.665 & 0.524 & \underline{0.611}& 0.657 & 0.801 & 0.719 & 0.757 & --- & --- & --- & --- \\
        GAE     & 0.612 & 0.783 & 0.442 & 0.475 & 0.679 & \underline{0.821} & 0.696 & 0.758 & --- & --- & --- & --- \\
        DEC     & 0.425 & 0.685 & 0.143 & 0.000 & 0.621 & 0.588 & 0.813 & 0.803 & 0.797 & \textbf{0.810} & 0.516 & 0.541 \\
        SpectralNet & 0.528 & 0.781 & 0.515 & 0.540 & 0.678 & 0.818 & 0.820 & 0.817 & --- & --- & --- & --- \\
        ClusterGCN-L2 & 0.581 & 0.772 & 0.580 & 0.601 & 0.659 & 0.769 & 0.632 & 0.666 & 0.707 & \underline{0.730} & 0.532 & 0.605 \\
        ClusterGCN-L4 & 0.574 & 0.738 & 0.593 & 0.599 & 0.662 & 0.769 & 0.631 & 0.665 & 0.708 & 0.732 & 0.532 & 0.606 \\
    \hline
        Ours-A (fixed $B$) & 0.615 & \textbf{0.814} & 0.443 & 0.490 & 0.490 & 0.475 & 0.290 & 0.351 & 0.468 & 0.635 & 0.588 & \textbf{0.630} \\
        Ours-B (fixed $k$) & 0.373 & 0.532 & 0.285 & 0.125 & 0.265 & 0.174 & 0.284 & 0.228 & 0.289 & 0.232 & 0.194 & 0.112 \\
        Ours-C (KNN) & 0.654 & 0.778 & 0.555 & 0.511 & 0.729 & 0.746 & 0.776 & 0.735 & 0.793 & 0.743 & 0.533 & 0.553 \\
        AnchorGAE-L2 & \textbf{0.663} & \underline{0.787} & \textbf{0.635} & \textbf{0.613} & \textbf{0.853} & \textbf{0.828} & \textbf{0.823} & \textbf{0.773} & \textbf{0.833} & \underline{0.773} & \textbf{0.645} & 0.607 \\
        AnchorGAE-L4 & 0.591 & 0.756 & 0.576 & 0.508& \underline{0.781} & 0.775 & \underline{0.810} & \underline{0.762} & \underline{0.808} & 0.771 & 0.613 & 0.590 \\
    \hline
    
    \hline
    \end{tabular}
    
\end{table*}

\subsection{Will the Adaptive Update Work?} \label{section_analysis}
Since the initialized anchors and transition probability is calculated from 
the original features, the information hidden in $\bm A$ may be limited 
especially when the data distribution is complicated. 
To precisely exploit the high-level information in a self-supervised way, 
a feasible method is to adaptively update anchors according to the learned embedding. 

To achieve this goal, we first need to recompute anchors, $\mathcal{U}$. 
According to Eq. (\ref{eq_p}) and (\ref{eq_anchor}), we can obtain anchors under the deep representations, $\{f(\bm c_j)\}_{j=1}^m$.
However, \textbf{the input of our model requires anchors under the raw features}. 
In other words, it is inevitable to compute $\bm c_j = f^{-1}(f(\bm c_j))$.
Unfortunately, it is hard to ensure $f(\cdot)$ an invertible mapping. 
A scheme is to utilize the classical auto-encoders since the decoder that 
tries to reconstruct the raw input from the embedding can be 
regarded as an approximation of the inverse mapping of the encoder. 
To simplify the discussion in this paper, 
we simply set 
\begin{equation}
    \bm c_j = f^{-1}(f(\bm c_j)) \approx \bm X^T \bm b_j,
\end{equation}
to estimate $\bm c_j$ under the original feature space. 

In our expectations, the self-supervised update of anchors and transition probability 
would become more and more precise. 
However, the performance of AnchorGAE will become worse and worse after several updates 
of anchors, which has been proved theoretically and empirically. 
An interesting phenomenon is that \textbf{a better reconstruction 
indicates a more severe collapse}. 
The following theorem provides a rigorous description of this collapse. 
\begin{myTheorem} \label{theo}
    Let $p_{k}(\cdot | v_i)$ be the $k$-largest one of $\{p(u_j | v_i)\}_{j=1}^m$ and 
    $\{\hat p(u_j | v_i)\}_{j=1}^m$ be the updated distribution with the same 
    $k$ after step 6 of Algorithm \ref{alg_anchor_gae}. 
    If $|q(\cdot | v_i) - p(\cdot | v_i)| \leq \varepsilon$, 
    then there exists a constant $\delta > 0$,
    so that when $\varepsilon \leq \delta$, 
    $$|\hat p_j(\cdot | v_i) - \frac{1}{k}| \leq \mathcal{O}(\log^{-1/2}(1/\varepsilon)) $$ 
    holds for any $j \leq k$. 
\end{myTheorem}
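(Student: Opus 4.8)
The plan is to trace how the closeness $|q(\cdot|v_i) - p(\cdot|v_i)| \le \varepsilon$ forces the *updated* anchors $f(\bm c_j)$ to collapse together, and then feed that collapse back through Eq.~\eqref{eq_p} to conclude that the recomputed distribution $\hat p(\cdot|v_i)$ is nearly uniform on its $k$-sized support. First I would unpack the decoder: since $q(u_j|v_i) = \exp(-d(v_i,u_j)) / \sum_l \exp(-d(v_i,u_l))$ with $d(v_i,u_j) = \|f(\bm x_i) - f(\bm c_j)\|_2^2$, the hypothesis $q \approx p$ means that for every $j$ in the support of $p(\cdot|v_i)$ we have $\exp(-d(v_i,u_j)) \approx p(u_j|v_i)\cdot Z_i$ where $Z_i = \sum_l \exp(-d(v_i,u_l))$. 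Taking logarithms, $d(v_i,u_j) \approx -\log p(u_j|v_i) - \log Z_i$, so the pairwise distances $d(v_i,u_j)$ are pinned down (up to an $i$-dependent additive constant) by the current $p$. Crucially, because $q$ is a softmax, $q(u_j|v_i)$ can only be as large as a constant unless $Z_i$ is large, which happens only when *many* anchors are close to $f(\bm x_i)$; quantitatively, if $p_k(\cdot|v_i) \approx$ some value $\ge$ a constant $/k$, the softmax being able to reproduce it forces $d(v_i,u_j) - d(v_i,u_l)$ to be $\mathcal O(1)$ (in fact $o(1)$ as $\varepsilon \to 0$) across the top-$k$ anchors — i.e. the top-$k$ anchors become nearly equidistant from $v_i$, and moreover all at a distance that shrinks relative to the log-scale $\log(1/\varepsilon)$.

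Next I would make the "equidistant" statement precise: from $d(v_i,u_j) = -\log(q(u_j|v_i) Z_i)$ and $|q - p| \le \varepsilon$, for two indices $j, j'$ in the top-$k$ set one gets $|d(v_i,u_j) - d(v_i,u_{j'})| = |\log q(u_j|v_i) - \log q(u_{j'}|v_i)| \le |\log p(u_j|v_i) - \log p(u_{j'}|v_i)| + \mathcal O(\varepsilon / \min_j q(u_j|v_i))$. Because the true $p(\cdot|v_i)$ over its top-$k$ support has entries bounded away from $0$ and bounded above — the $k$-sparse construction of Eq.~\eqref{eq_p} gives entries of order $1/k$ — the first term is $\mathcal O(1)$ uniformly, but one needs the *sum* $\sum_l \exp(-d(v_i,u_l))$ to be polynomially large in $1/\varepsilon$ for the softmax to output values as large as $\Theta(1/k)$; this is where the $\log(1/\varepsilon)$ enters. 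Writing $d(v_i,u_j) = \mu_i + r_{ij}$ with $\mu_i = \min_j d(v_i,u_j)$, the softmax identity forces $\mu_i \gtrsim \log(1/\varepsilon) - \log k$ and the residuals $r_{ij}$ for $j$ in the top-$k$ set to satisfy $r_{ij} \le \mathcal O(1)$. Then I would plug these distances into Eq.~\eqref{eq_p}: $\hat p(u_j|v_i) = \big(\frac{d_{k+1} - d(v_i,u_j)}{\sum_{l=1}^k (d_{k+1} - d_l)}\big)_+$. The numerator for top-$k$ anchors is $d_{k+1} - \mu_i - r_{ij}$, and since $d_{k+1} - \mu_i$ is itself of order $\mu_i \approx \log(1/\varepsilon)$ while $r_{ij} = \mathcal O(1)$, each numerator equals $(d_{k+1} - \mu_i)(1 + \mathcal O(1/\log(1/\varepsilon)))$; summing over $l=1,\dots,k$ and dividing, $\hat p(u_j|v_i) = \frac1k (1 + \mathcal O(1/\log(1/\varepsilon)))$, which gives the claimed bound $|\hat p_j(\cdot|v_i) - \tfrac1k| \le \mathcal O(\log^{-1/2}(1/\varepsilon))$ (the exponent $1/2$ is a conservative slack, absorbing the contribution of the softmax normalization and the fact that $d_{k+1}$ itself fluctuates).

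The main obstacle — and the step I would spend the most care on — is controlling the denominator scale, i.e. proving that $\mu_i$, the minimum distance from $v_i$ to its anchors, really does grow like $\log(1/\varepsilon)$ after the update, rather than staying bounded. This is what makes the $\mathcal O(1)$ residual spread negligible *relative to* the distances, and hence what forces near-uniformity. The argument is: a softmax over anchors whose smallest exponent-argument is $-\mu_i$ produces a maximum probability no larger than roughly $e^{0}/ (\text{number of near-minimizers})$ unless the minimizer is isolated; to reproduce $p(\cdot|v_i)$ whose top entries are $\Theta(1/k)$ with small $\varepsilon$ error, one needs the spread $r_{ij}$ over the top-$k$ to vanish, and the gap to $d_{k+1}$ to be large on the log scale — formally one inverts the softmax, $d_{k+1} - \mu_i \ge \log(q_{\min}^{(k)}/q(u_{k+1}|v_i)) \gtrsim \log(1/\varepsilon)$ once $q(u_{k+1}|v_i) \le \varepsilon$ (which holds since $p$ is exactly $k$-sparse so $p(u_{k+1}|v_i) = 0$ and $|q - p| \le \varepsilon$). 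Making this chain of inequalities uniform in $i$, and checking that the hidden constants do not depend on $n$ or the particular anchor configuration, is the delicate part; everything else is substitution into Eqs.~\eqref{eq_p}, \eqref{eq_anchor}, and the definition of $q$.
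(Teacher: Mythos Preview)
Your core argument is essentially the paper's Lemma~\ref{lemma_uniform}: from $p_{k+1}(\cdot|v_i)=0$ and $|q-p|\le\varepsilon$ you get $q_{k+1}(\cdot|v_i)\le\varepsilon$, hence $d_{k+1}-\min_j d_j\gtrsim\log(1/\varepsilon)$, while the spread of $d_j$ over the top-$k$ indices is $\mathcal O(1)$; plugging into Eq.~\eqref{eq_p} gives $|\hat p_j-\tfrac1k|\le\mathcal O(\log^{-1}(1/\varepsilon))$. That part is fine and matches the paper.

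The gap is that you treat ``the update'' as a single application of Eq.~\eqref{eq_p} with the post-training distances, whereas $\hat p$ in the theorem is the output of the \emph{full} Algorithm~\ref{alg_anchor}, which alternates between recomputing $p^{(t)}(\cdot|v_i)$ and recomputing the anchors $f^{(t)}(\bm c_j)$ via Eq.~\eqref{eq_anchor}. After the first $p$-update your bound holds, but then the anchors move, the distances $d^{(t)}(v_i,u_j)$ change, and you must re-establish the near-uniformity for $p^{(2)},p^{(3)},\dots$. The paper handles this with two additional ingredients you do not supply: Lemma~\ref{lemma_mean}, which shows that when $p^{(t)}\approx\tfrac1k$ the anchor update sends $f^{(t)}(\bm c_j)$ to within $\mathcal O(\varepsilon)$ of the neighborhood mean $\tfrac{1}{|\mathcal N_j|}\sum_{i\in\mathcal N_j}f(\bm x_i)$; and Lemma~\ref{lemma_disjoint}, a structural statement guaranteeing (for $\varepsilon\le\delta$) that the top-$k$ neighborhood structure does not get scrambled by the anchor move, so the same $k$ indices remain the support. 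The main proof then bounds $d^{(1)}(v_i,u_j)$ via the triangle inequality $\|f(\bm x_i)-f^{(1)}(\bm c_j)\|_2\le\|f(\bm x_i)-\bm\theta_j\|_2+\mathcal O(\sqrt{\cdot})$ and propagates this through Eq.~\eqref{eq_p} to get the $p^{(2)}$ bound, and iterates.

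This is also where the exponent $-1/2$ actually comes from: it is \emph{not} conservative slack as you suggest. The Lemma~\ref{lemma_uniform} step already gives the sharper $\mathcal O(\log^{-1}(1/\varepsilon))$; the degradation to $\mathcal O(\log^{-1/2}(1/\varepsilon))$ arises precisely because the anchor-update error in Lemma~\ref{lemma_mean} is a bound on a \emph{squared} norm, so after the triangle inequality on $\|\cdot\|_2$ one picks up a square root, and this $\sqrt{\log^{-1}(1/\varepsilon)}$ term is what survives in the numerator of the recomputed $p^{(2)}$. Your proposal as written would, if completed, prove a statement only about $p^{(1)}$, not about the converged $\hat p$, and your account of the $-1/2$ does not match the mechanism.
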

The above conclusion
means that $\hat p(\cdot | v_i)$ is a sparse and uniform distribution.
In other words, the constructed graph \textbf{degenerates into an unweighted graph}, 
which is usually unexpected in clustering. 
More intuitively, the degeneration is caused by \textit{projecting 
samples from the identical cluster into multiple tiny and cohesive clusters} 
as shown in Figure \ref{figure_collapse}.
These tiny clusters have no connection with each other and they scatter 
randomly in the representation space, such that the clustering result is 
disturbed. 
To avoid this collapse, we propose two strategies: (1) increase $k$ which represents 
the sparsity of neighbors; (2) decrease the number of anchors, $m$.
In this paper, we dynamically increase $k$, 
\textit{which is equivalent to attempt to connect these tiny groups before they scatter randomly}, as follows 
\begin{equation}
    k \leftarrow k + \Delta k .
\end{equation}
Suppose that there are $c$ clusters and 
the number of samples in the smallest cluster is represented by $n_s$. 
Then we can define the upper-bound of $k$ and increment of sparsity $\Delta k$ as 
\begin{equation}
    k_m = \lfloor m \times \frac{n_s}{n}\rfloor  ~~ \textrm{and} ~~ \Delta k = \lfloor \frac{k_m - k_0}{E} \rfloor, 
\end{equation}
where $k_0$ denotes the sparsity of the initialized anchor graph and $E$ be 
the number of iterations to update anchors. 
If we have no prior information of $n_s$, $n_s$ can be simply set as 
$\lfloor n/c \rfloor$ or $\lfloor n / (2c) \rfloor$.
The brief modification helps a lot to guarantee the stability and performance 
of AnchorGAE.

\begin{figure*}[t]
    \small
    \centering
    \setlength{\fboxrule}{0.1pt}
    \setlength{\fboxsep}{1pt}
    \subcaptionbox{Original features}{
        \fbox{
        \includegraphics[width=0.21\linewidth]{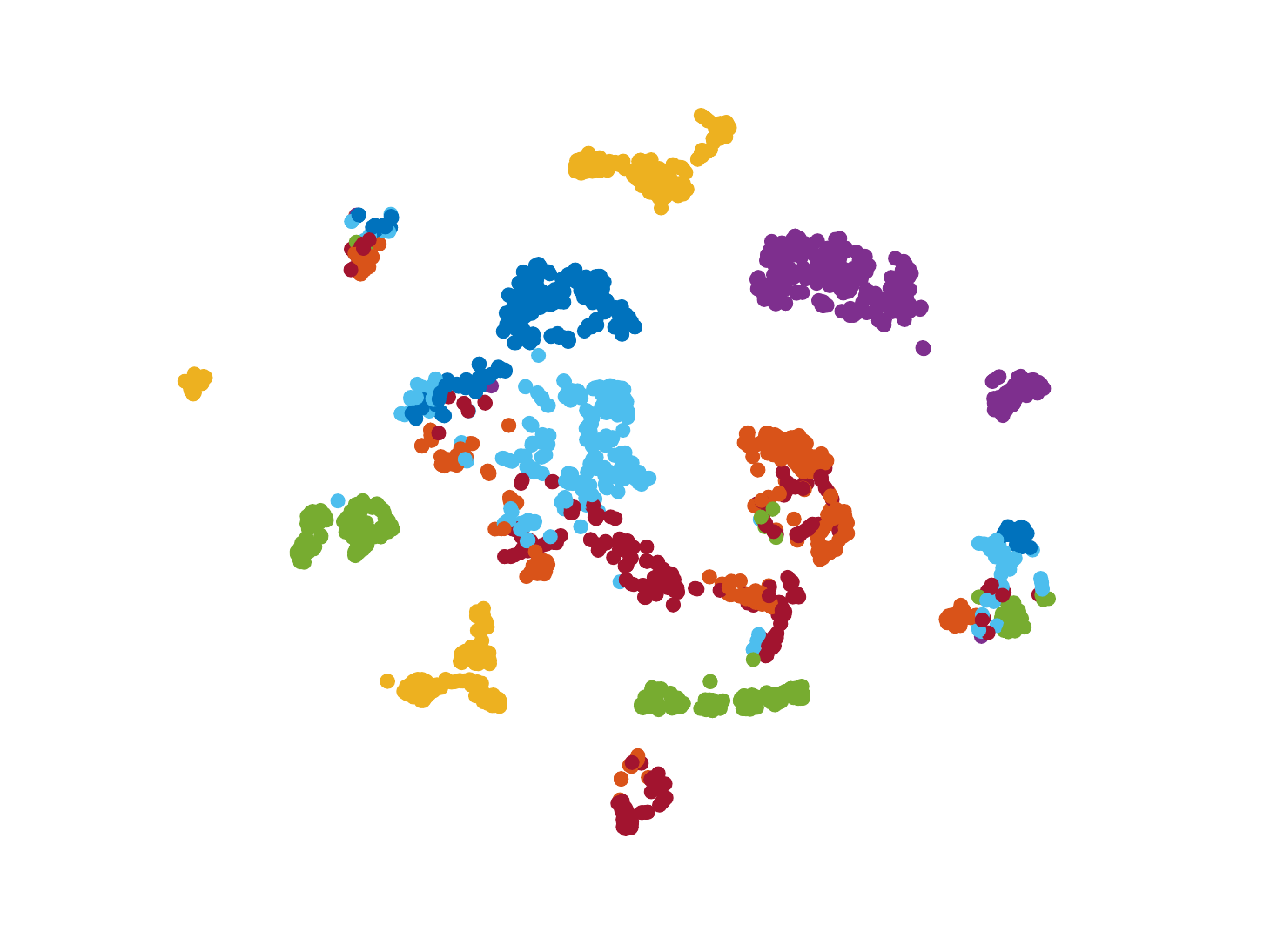}
        }
    }
    \subcaptionbox{Fixed $k$}{
        \label{subfigure_fixed_k}
        \fbox{
        \includegraphics[width=0.21\linewidth]{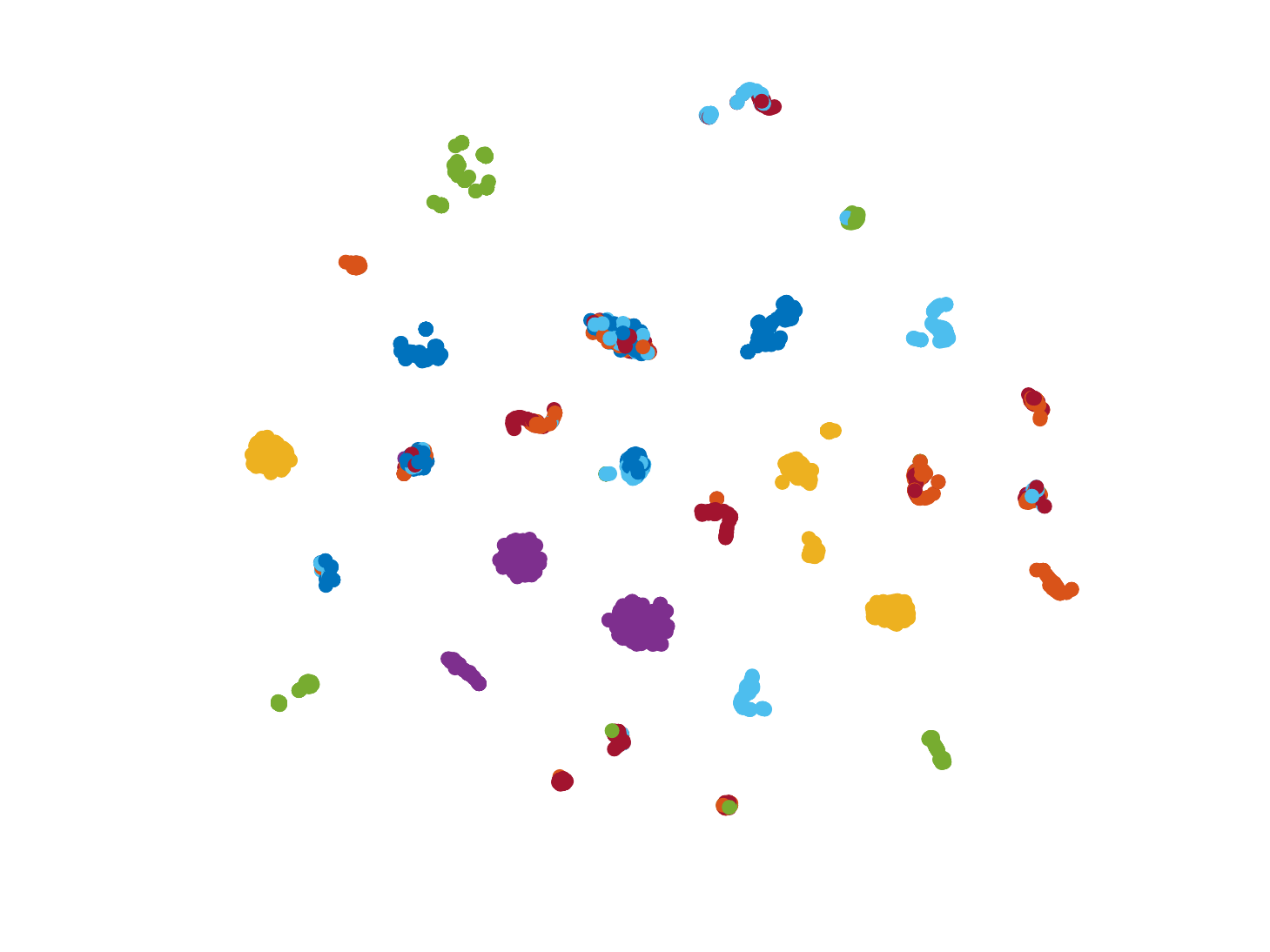}
        }
    }
    \subcaptionbox{Fixed $B$}{
        \fbox{
        \includegraphics[width=0.21\linewidth]{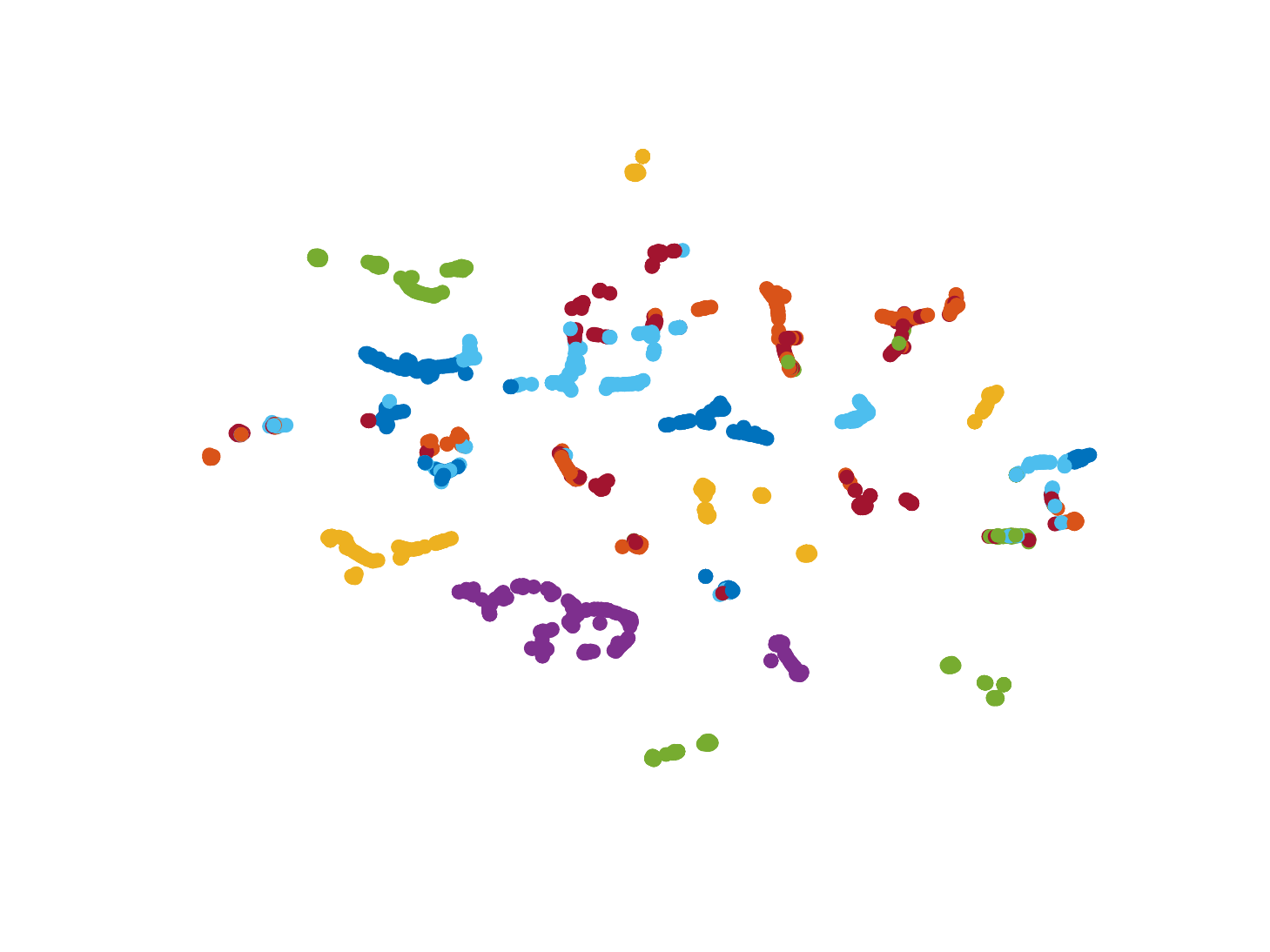}
        }
    }
    \subcaptionbox{AnchorGAE}{
        \fbox{
        \includegraphics[width=0.21\linewidth]{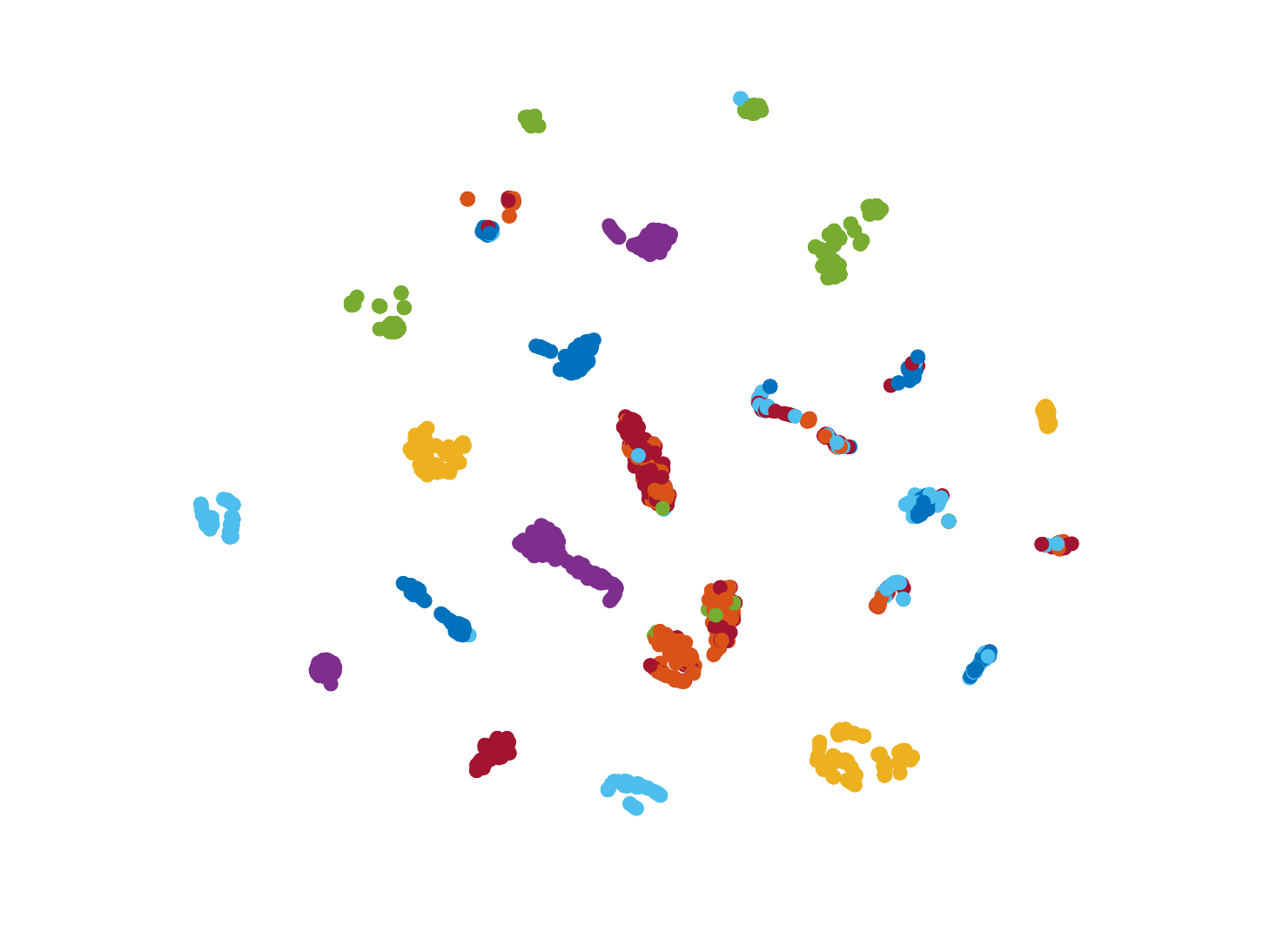}
        }
    }

    \caption{Visualization of SEGMENT by t-SNE. }
    \label{figure_visualization}
    
\end{figure*}

When AnchorGAE is trained, we can perform some fast clustering algorithms 
on $\bm B$ or $\bm Z$, which is elaborated in Section \ref{section_fast_clustering}. 
The optimization of AnchorGAE is summarized in Algorithm \ref{alg_anchor_gae}.

\subsection{Obtain Clustering Assignments} \label{section_fast_clustering}
After training AnchorGAE, we can simply run $k$-means on the learned embedding.
Due to Assumption \ref{assumption_generative}, Euclidean distances of similar points will be small so
that it is appropriate to use $k$-means on the deep representations. 

Another method is to perform spectral clustering on the constructed 
graph, which is the same as anchor-based spectral clustering. 
Let 
\begin{equation}
    \bm{\mathcal{L}} = \bm I - \bm D^{-\frac{1}{2}} \bm A \bm D^{-\frac{1}{2}} = \bm I - \bm A = \bm I - \bm B \bm \Delta^{-1} \bm B^T
\end{equation}
be the normalized Laplacian matrix. 
It should be emphasized that $\bm{\mathcal{L}}$ is also the unnormalized Laplacian matrix.
Accordingly, the objective of spectral clustering is 
\begin{equation}
    \max \limits_{\bm F^T \bm F = \bm I} {\rm tr}(\bm F^T \bm B \bm \Delta^{-1} \bm B^T \bm F), 
\end{equation}
where $\bm F \in \mathbb{R}^{n \times c}$ is the soft indicator matrix. 
Let $\hat{\bm B} = \bm B \bm \Delta^{-\frac{1}{2}}$. 
Then the eigenvalue decomposition of $\bm A$ can be transformed into 
the singular value decomposition of $\hat{{\bm B}}$, 
which reduces the complexity from $\mathcal O(n^2 c)$ to $\mathcal O(mnc)$. 

Finally, one can perform clustering on the bipartite graph directly 
rather than the constructed graph. Let the bipartite graph be 
\begin{equation}
    \bm W = 
    \left [
    \begin{array}{c c}
        \bm 0 & \bm B \\
        \bm B^T & \bm 0 
    \end{array}
    \right ]
    \in \mathbb{R}^{(n+m) \times (n+m)} .
\end{equation}
The degree matrix is represented as 
\begin{equation}
    \bm D = 
    \left [
    \begin{array}{c c}
        \bm D_v & \bm 0 \\
        \bm 0 & \bm D_u 
    \end{array}
    \right ] 
    .
\end{equation}
Then the normalized cut used spectral clustering is to optimize 
\begin{equation}
    \max \limits_{\bm F^T \bm F = \bm I} {\rm tr}(\bm F^T \bm D^{-\frac{1}{2}} \bm W \bm D^{-\frac{1}{2}} \bm F^T) ,
\end{equation}
where 
\begin{equation}
    \bm F = 
    \left [
    \begin{array}{c}
        \bm V \\
        \bm U
    \end{array}
    \right ]
    .
\end{equation}
Since $\bm D_v = \bm I$,
the above problem can be further reformulated as 
\begin{equation}
    \max \limits_{\bm U^T \bm U + \bm V^T \bm V = \bm I} {\rm tr}(\bm V^T \bm B \bm D_u^{-\frac{1}{2}} \bm U).
\end{equation}
The closed-form solution can be calculated by 
\begin{equation}
    \left \{
        \begin{array} {l}
            \bm V = \frac{\sqrt 2}{2} \tilde{\bm V} \\
            \bm U = \frac{\sqrt 2}{2} \tilde{\bm U}, \\
        \end{array}
    \right .
\end{equation}
where $\tilde{\bm V}$ and $\tilde{\bm U}$ are $c$ leading left and right 
singular vectors of $\bm B \bm D_u^{-\frac{1}{2}}$. 
The details can be found in \cite{co-clustering}.
In our experiments, we report the results obtained by the above technique 
based on bipartite graphs, since it usually outperforms the simple $k$-means.

\begin{figure*}[t]
    \centering
    \subcaptionbox{SEGMENT}{
        \label{SEGMENT_mChange}
        \includegraphics[width=0.31\linewidth]{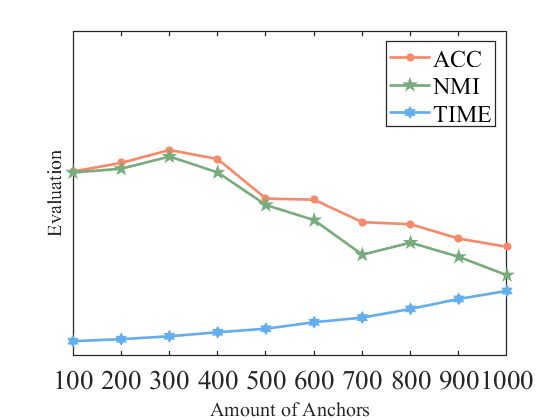}
    }
    \subcaptionbox{USPS}{
        \label{USPS_mChange}
        \includegraphics[width=0.31\linewidth]{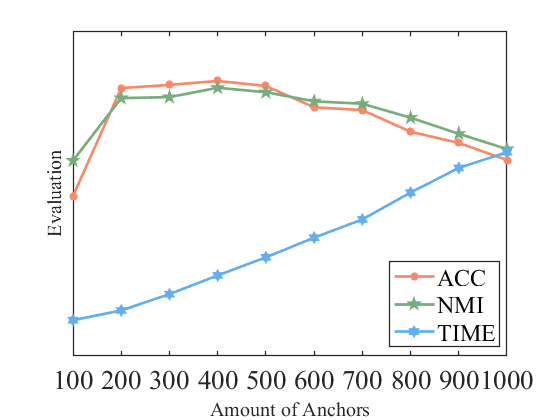}
    }
    \subcaptionbox{MNIST-full}{
        \label{MNIST_TEST_mChange}
        \includegraphics[width=0.31\linewidth]{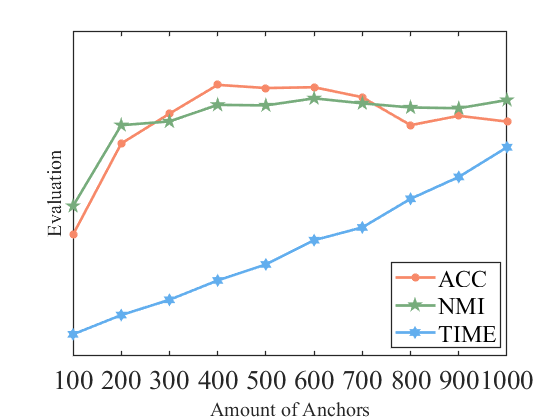}
    }

    \subcaptionbox{ISOLET}{
        \includegraphics[width=0.31\linewidth]{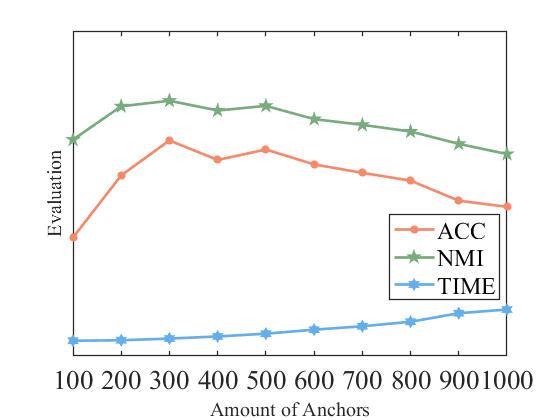}
    }
    \subcaptionbox{MNIST-test}{
        \includegraphics[width=0.31\linewidth]{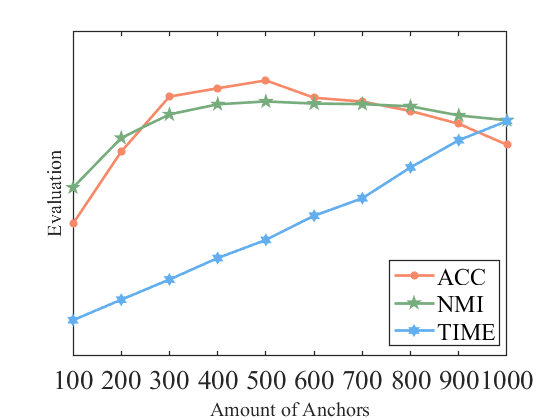}
    }
    \subcaptionbox{Fashion-MNIST}{
        \includegraphics[width=0.31\linewidth]{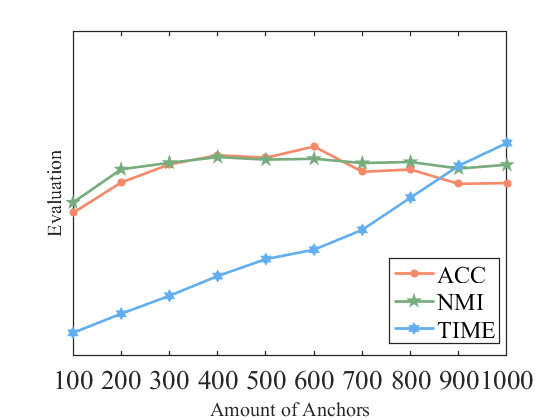}
    }
    \caption{The influence of number of anchors to ACC, 
    NMI, and TIME on 6 datasets.}
    \label{figure_anchor_change}
\end{figure*}

\begin{table}[t]
    \centering
    \setlength{\tabcolsep}{2.5mm}
    \renewcommand\arraystretch{1.2}
    \caption{Information of Datasets}
    \label{table_datasets}
    \begin{tabular}{l c c c}
    \hline
    
    \hline
        Dataset & \# Feature & \# Size & \# Classes \\
    \hline
    \hline
        ISOLET & 617 & 1560 & 26 \\
        SEGMENT & 19 & 2310 & 7 \\
        USPS & 256 & 9298 & 10 \\
        MNIST-test & 784 & 10000 & 10 \\
        MNIST-full & 784 & 70000 & 10 \\
        Fashion-MNIST & 784 & 70000 & 10 \\
    \hline
    \end{tabular}
    
\end{table}

\section{Experiment}
In this section, the experimental results and parametric settings of \textit{AnchorGAE} are illustrated. 
According to these experimental results, we analyze the performance of \textit{AnchorGAE} objectively. 
Additionally, the visualization of data is displayed. 
The experimental results strongly support the theoretical analysis. 

\subsection{Datasets and Comparative Methods}
The performance of the \textit{AnchorGAE} is evaluated on 6 non-graph datasets, 
including 2 UCI datasets (\textit{ISOLET} \cite{UCI}
and \textit{SEGMENT} \cite{UCI})
and 4 image datasets (\textit{USPS} \cite{USPS}, 
\textit{MNIST-test} \cite{MNIST}, 
\textit{MNIST-full} \cite{MNIST}, 
and \textit{Fashion-MNIST} \cite{FashionMNIST}). 
Remark that the two UCI datasets show the difference between AnchorGAE and 
CNN-based clustering models. Since each data point can only be represented 
by a vector, the CNN-based model can not be applied to these datasets. 
Note that Fashion-MNIST is denoted by \textit{Fashion} in Table \ref{table_datasets}
The details of these datasets are shown in Table \ref{table_datasets}.

\textit{AnchorGAE} is compared with 9 methods, including 5 anchor-based 
methods (\textit{Nystrom} \cite{Nystrom}, \textit{CSC} \cite{CSC}, 
\textit{KASP} \cite{KASP}, \textit{LSC} \cite{LSC}, and \textit{SNC} \cite{SNC}), 
3 GAE-based methods (\textit{SGC} \cite{SGC}, 
\textit{GAE} \cite{GAE} and \textit{ClusterGCN} \cite{ClusterGCN}), 
2 deep methods (\textit{DEC} \cite{DEC} and \textit{SpectralNet} \cite{SpectralNet}), 
and classical \textit{K-Means} \cite{KMeans}. 
All codes are downloaded from the homepages of authors.

\subsection{Experimental Setup}
There are three clustering evaluation criteria used in our experiments, 
including the clustering accuracy (\textit{ACC}), normalized mutual information (\textit{NMI}), 
and running \textit{time}. 
In our experiments, the number of anchor points is set from 100 to 1000 and 
the increasing step is 100. 
The best results are reported in Table \ref{table_results}. 
The activation function of the last layer is set to linear, 
while the other layers use the ReLU function. 
Particularly, \textit{ClusterGCN} recommends 4 layers convolution. 
To ensure the convolution structure of \textit{ClusterGCN} the same 
as others, we modify the 4 layers of convolution structure 
(denoted by \textit{ClusterGCN-L4}) to 2 layers 
(denoted by \textit{ClusterGCN-L2}) and 
remain 4 layers of the convolution structure as a competitor. 
To apply ClusterGCN to unsupervised tasks, we use the same GAE loss and 
decoder architecture that is same as the other models. 
To test the scalability of \textit{AnchorGAE} in a deeper network, 
we design 2 layers (denoted by \textit{AnchorGAE-L2}) and 4 layers 
(denoted by \textit{AnchorGAE-L4}) convolution structure separately. 
The 2 layers neurons are designed as 128 and 64 or 256 and 32. 
For the 4 layers neurons, we set the first three layers as 16 and set 
the last layer as $L$+1. 
$L$ is the number of labels. 
The maximum iterations to update network are 200, while
the maximum epochs to update anchors are set as 5. 
The initial sparsity $k_0$ is set as 3. 

\begin{figure}[t]
    \centering
    \includegraphics[width=0.95\linewidth]{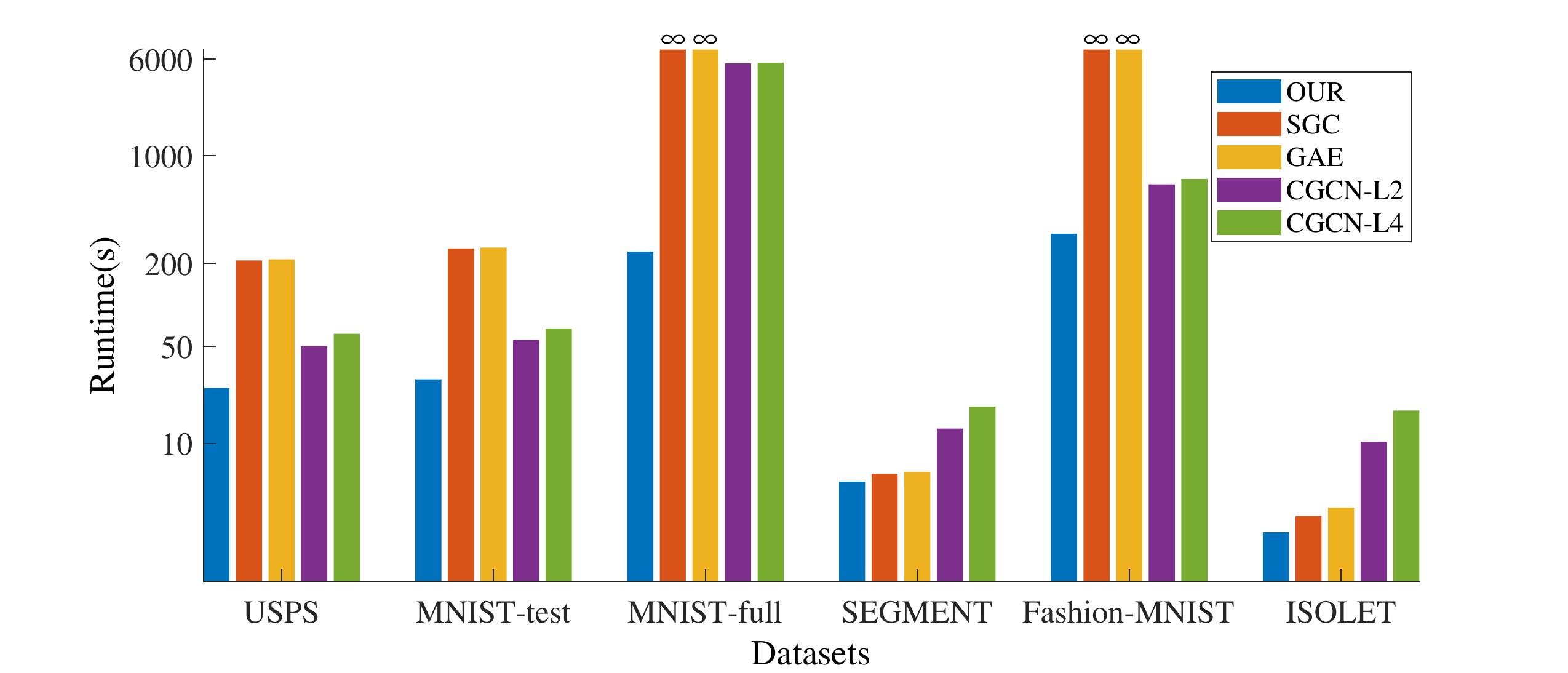}
    \caption{Runtime of several GNN-based methods. 
    If the model can not run on some datasets, it is marked by $\infty$. }
    \label{TIME}
\end{figure}

To investigate the impact of anchors, we run AnchorGAE with different numbers of 
anchors and initial sparsity, \textit{i.e.}, $m$ and $k_0$. 
We also conduct 3 \textbf{ablation experiments} based on AnchorGAE-L2 to study the impact of different parts. 
Firstly, we fix anchors and transition probability during training, 
which is denoted by \textit{Ours-A}.
Secondly, we fix the sparsity $k$, which is denoted by \textit{Ours-B}. 
Thirdly, we use a KNN graph, instead of a weighted graph, with the adaptive update 
and incremental $k$ in our model, which is denoted by \textit{Ours-C}. 
All methods are run 10 times and the average results are reported. 
All experiments are conducted on a PC with an NVIDIA GeForce GTX 1660 GPU SUPER.

\begin{figure}[t]
    \centering
    \subcaptionbox{SEGMENT}{
        \includegraphics[width=0.47\linewidth]{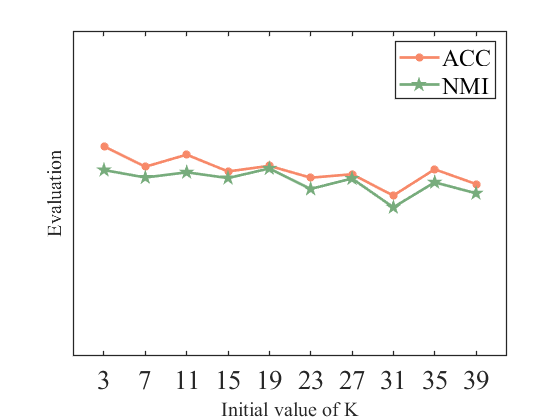}
    }
    \subcaptionbox{MNIST-test}{
        \includegraphics[width=0.47\linewidth]{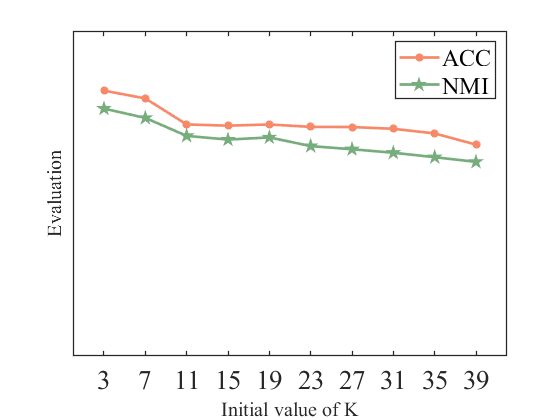}
    }
    \caption{The influence of $k_0$ to
    ACC and NMI on SEGMENT and MNIST-test.}
    \label{figure_k_change}
\end{figure}

\subsection{Experimental Results}

ACC and NMI are shown in Table \ref{table_results} while  
the runtime is reported in Figure \ref{TIME}. 
For ACC and NMI, the best results are bolded and suboptimal results are 
underlined. 
Especially, some methods either spend too much memory to run on 
\textit{MNIST-full} and \textit{Fashion-MNIST}, or the codes provided 
by authors fail to run on them. 
So they are denoted by a horizontal line in Table \ref{table_results}. 
Similarly, they are represented as $\infty$ in Figure \ref{TIME}. 
Besides, the visualization of SEGMENT is shown in Figure \ref{figure_visualization}.
Figure \ref{figure_anchor_change} shows the 
impact of the number of anchors while the effect of the initial sparsity is illustrated in 
Figure \ref{figure_k_change}.

From the experimental results, we can conclude that:
\begin{enumerate}
\item 
On almost all datasets, 
AnchorGAE-L2 obtains the best results in all metrics. 
On \textit{MNIST-full}, \textit{AnchorGAE} obtains 
the suboptimal NMI, 
while DEC, which consists of much more layers, outperforms AnchorGAE. 
Meanwhile, AnchorGAE, with only 2 layers, achieves better ACC than DEC. 
\textit{AnchorGAE} needs to compute anchors 
such that it requires more time. 
\item 
On \textit{MNIST-full}, normal GAE-based methods fail to 
run due to \textit{out-of-memory} since these methods 
need to save the graph structure in each convolution. 
\textit{AnchorGAE} can convolute without constructing 
a graph explicitly, 
which not only needs less memory but also reduces runtime.
\item 
When sparsity $k$ is fixed or anchors and transition 
probability (\textit{i.e.}, matrix $\bm B$) is not updated, 
AnchorGAE usually gets poor results on many datasets. 
For instance, \textit{Ours-B} shrinks about 55\% in ACC 
and 55\% in NMI on MNIST-full, 
while \textit{Ours-A} reduces about 40\% in ACC and 
reduces about 15\% in NMI. 
\item 
To ensure fair competition and distinguish between anchor-based 
methods and GAE-based methods, we set the same number of 
anchors for \textit{AnchorGAE-L4} and  other anchor-based 
methods.
Note that the first two layers of AnchorGAE-L4 are the same as AnchorGAE-L2.
Through observing the results of experiments, 
AnchorGAE-L4 obtains acceptable performance in most 
of the datasets but not the best results. 
It may be caused by the too small dimension 
of the final embedding.
\item 
If the initial sparsity, $k_0$, is set as a too large number, 
then it results in dense connections. 
As a result, a great quantity of wrong information is captured. 
\item 
From the comparison between \textit{AnchorGAE} and \textit{Ours-C}, 
we find that the weighted graphs are still important for clustering 
even after introducing non-linear GNN mapping modules. 
\end{enumerate}

\section{Proof of Theorem \ref{theo}}
For an arbitrary probability distribution $\{p(u_j | v_i)\}_{j=1}^m$, 
let $p_{k}(\cdot | v_i)$ be the $k$-largest one. 
For simplicity, let  
$d_{ij} = d(u_j, v_i) = \|f(\bm x_i) - f(\bm c_j)\|_2^2$ 
where $f(\cdot)$ represents the mapping function implemented by the trained GCN encoder. 
Note that the connectivity distribution is estimated through the current representations, 
\textit{i.e.}, 
\begin{equation}
    p^{(0)}(\cdot | v_i) = \arg \min _{p(\cdot | v_i)} \mathbb{E}_{u_j \sim p(\cdot | v_i)} \hat d_{ij} + \gamma_i \sum _{j=1}^m (p(u_j | v_i) - \frac{1}{m})^2.
\end{equation}
where $\hat d_{ij} = \|\hat f(\bm x_i) - f(\bm c_j)\|_2^2$ and 
$\hat f(\cdot)$ denotes the mapping function before the new training of GCN encoders.

We define $p^{(t)}(u_j | v_i)$ and $f^{(t)}(\bm c_j)$ 
as the solutions at the $t$-th iteration. 
To prove the theorem clearly, the iterative update of $p(\cdot | v_i)$ and anchors 
is shown in Algorithm \ref{alg_anchor}. 
The following lemma aims to depict the first estimated connectivity distribution
after one \textit{perfect} training of GCNs. 

\begin{myLemma} \label{lemma_uniform}
    Suppose that 
    $$p^{(1)}(\cdot | v_i) = \arg \min _{p(\cdot | v_i)} \mathbb{E}_{u_j \sim p(\cdot | v_i)} d_{ij} + \gamma_i \sum _{j=1}^m (p(u_j | v_i) - \frac{1}{m})^2$$
    where $\gamma_i$ ensures the sparsity as $k$.
    Let $p_k^{(0)} (\cdot | v_i) = \mu$. If
    $|p_j^{(0)}(\cdot | v_i) - q_j(\cdot | v_i)| \leq \varepsilon \leq \mu$ for any $i$, 
    then $\forall j \leq k$, 
    \begin{equation}
        | p^{(1)}_j(\cdot | v_i) - \frac{1}{k}| \leq \mathcal{O}(\frac{\log (1 / \mu)}{\log (1 / \varepsilon)}) .
    \end{equation}
\end{myLemma}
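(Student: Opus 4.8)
\noindent\emph{Proof sketch (plan).}
The plan is to exploit the explicit softmax form of the decoder, $q(u_j | v_i) = \exp(-d_{ij}) / \sum_{l=1}^m \exp(-d_{il})$, to convert the reconstruction guarantee into metric information about the $d_{ij}$, and then substitute that information into the closed-form update \eqref{eq_p}. The enabling observation is that, writing $S_i = \sum_{l=1}^m e^{-d_{il}}$, one has $d_{ij} = -\log q(u_j | v_i) - \log S_i$, so every distance \emph{difference} $d_{ij} - d_{il}$ is a function of the reconstructed probabilities alone; the uncontrolled normalization $\log S_i$ never enters any quantity appearing in \eqref{eq_p}.

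First I would dispose of the regime $\varepsilon \geq \mu/2$: since $p^{(0)}(\cdot | v_i)$ is $k$-sparse we have $\mu \leq 1/k$, hence $\log(1/\mu)/\log(1/\varepsilon) = \Omega(1)$, while $|p^{(1)}_j(\cdot | v_i) - 1/k| \leq 1$ trivially, so the bound is automatic. Thus I may assume $\varepsilon < \mu/2$. Let $T$ be the support of $p^{(0)}(\cdot | v_i)$, which has size $k$. Combining $k$-sparsity with $\|q(\cdot | v_i) - p^{(0)}(\cdot | v_i)\|_\infty \leq \varepsilon$ yields a clean dichotomy: $q(u_j | v_i) \in [\mu - \varepsilon, 1]$ for $j \in T$, and $q(u_j | v_i) \leq \varepsilon < \mu - \varepsilon$ for $j \notin T$. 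Translating through $d_{ij} = -\log q(u_j | v_i) - \log S_i$, this shows that the $k$ smallest distances $d_1(v_i,\cdot), \dots, d_k(v_i,\cdot)$ are exactly those indexed by $T$, that they lie in a band of width at most $\log\frac{1}{\mu-\varepsilon}$, and that each of them is at least $\log\frac{\mu-\varepsilon}{\varepsilon}$ below $d_{k+1}(v_i,\cdot)$. These two inequalities,
\[
d_{k+1}(v_i,\cdot) - d_j(v_i,\cdot) \geq \log\tfrac{\mu-\varepsilon}{\varepsilon} \quad (j \leq k), \qquad \bigl|d_j(v_i,\cdot) - d_l(v_i,\cdot)\bigr| \leq \log\tfrac{1}{\mu-\varepsilon} \quad (j, l \leq k),
\]
are the core of the argument.

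Then I would substitute into \eqref{eq_p}. Setting $a_j = d_{k+1}(v_i,\cdot) - d_j(v_i,\cdot)$ for $j \leq k$, the first inequality gives $a_j > 0$, so the positive part in \eqref{eq_p} is inactive and $p^{(1)}_j(\cdot | v_i) = a_j / \sum_{l=1}^k a_l$. A one-line estimate then gives
\[
\bigl|p^{(1)}_j(\cdot | v_i) - \tfrac1k\bigr| = \frac{\bigl|\sum_{l=1}^k (a_j - a_l)\bigr|}{k \sum_{l=1}^k a_l} \leq \frac{\max_{j,l\leq k}|a_j - a_l|}{k \min_{l\leq k} a_l} \leq \frac{\log\bigl(1/(\mu-\varepsilon)\bigr)}{k\,\log\bigl((\mu-\varepsilon)/\varepsilon\bigr)}.
\]
Finally I would simplify using $\mu - \varepsilon \geq \mu/2$: the numerator is $\leq \log(2/\mu)$, and the denominator's logarithm equals $\log(1/\varepsilon) - \log(2/\mu) \geq \tfrac12 \log(1/\varepsilon)$ whenever $\log(1/\varepsilon) \geq 2\log(2/\mu)$ (and in the complementary range $\log(1/\mu)/\log(1/\varepsilon) = \Omega(1)$, so the bound is again trivial). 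This gives $|p^{(1)}_j(\cdot | v_i) - 1/k| \leq 2\log(2/\mu)/(k\log(1/\varepsilon)) = \mathcal O(\log(1/\mu)/\log(1/\varepsilon))$, as claimed.

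The hard part is essentially the only non-mechanical step, namely establishing the dichotomy and the two displayed inequalities: one must check that a near-perfect softmax reconstruction of a $k$-sparse target compresses the top-$k$ distances into a band of width $\Theta(\log(1/\mu))$ while lifting the $(k{+}1)$-st distance by $\Theta(\log(1/\varepsilon))$, and one must track the cancellation of $\log S_i$ so that no hypothesis on the partition function is needed. Once these are in place, feeding them into \eqref{eq_p} is routine, and the borderline regimes where $\varepsilon$ is comparable to $\mu$ (where the stated bound degenerates to $\mathcal O(1)$) are handled separately but trivially.
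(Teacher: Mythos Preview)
Your proposal is correct and follows essentially the same route as the paper: invert the softmax decoder to read off $d_{ij} = -\log q(u_j|v_i) - \log S_i$, observe that $\log S_i$ cancels in differences, deduce that the top-$k$ distances span a band of width $\log\frac{1}{\mu-\varepsilon}$ while the gap to $d_{k+1}$ is at least $\log\frac{\mu-\varepsilon}{\varepsilon}$, and feed this into the closed form \eqref{eq_p}. Your treatment is in fact slightly more careful than the paper's, since you explicitly verify (via the dichotomy $\varepsilon < \mu-\varepsilon$) that the $k$ smallest distances coincide with the support of $p^{(0)}$ and you separately dispose of the borderline regime $\varepsilon \geq \mu/2$, both of which the paper leaves implicit.
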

\begin{proof}
    Without loss of generality, we focus on the connectivity distribution 
    of $v$ and suppose that 
    $p^{(0)}(u_1 | v) \geq p^{(0)}(u_2 | v) \geq \cdots \geq p^{(0)}(u_k | v) > 0 = p^{(0)}(u_{k+1} | v) = \cdots = p^{(0)}(u_m | v)$. 
    Let $p_i = p^{(0)}(u_i | v)$ and $q_i = q(u_i | v)$. According to the definitions, 
    \begin{equation}
        \begin{aligned}
            q_i = \frac{\exp(- d_i)}{\sum_{j=1}^m \exp(- d_j)} .
        \end{aligned}
    \end{equation}
    If for any $i$, we have $|p_i - q_i| \leq \varepsilon$. Clearly, $p_{k+1} = 0$. Suppose that $q_{k+1} = \tau \leq \varepsilon$. Therefore, we have
    \begin{equation}
        \begin{split}
            & \frac{\exp(- d_{k+1})}{\sum_{j=1}^m \exp(- d_j)} = \tau 
            \Leftrightarrow ~  d_{k+1} = \log \frac{1}{\tau} - \log C .
        \end{split}
    \end{equation}
    where $C = \sum_{j=1}^m \exp(- d_j)$. 
    Combine with the condition, $p_k \geq \mu$, and we have
    \begin{equation}
        \begin{split}
            & \frac{\exp(- d_{k})}{\sum_{j=1}^m \exp(- d_j)} \geq p_k - \varepsilon \geq \mu - \varepsilon \\
            \Rightarrow ~ &  d_k \leq -\log C - \log (\mu - \varepsilon) . \\
        \end{split}
    \end{equation}
    Similarly, since $p_1 \geq \frac{1}{k}$, 
    \begin{equation}
        \begin{split}
            & \frac{\exp(- d_{1})}{\sum_{j=1}^m \exp(- d_j)} \leq 1 
            \Rightarrow ~  d_1 \geq - \log C .\\
        \end{split}
    \end{equation}
    If we update the connectivity distribution based on $\{ d_{i}\}_{i = 1}^m$, then for any $i \leq k$,
    \begin{equation}
        p_i^{(1)} = \frac{ d_{k+1} -  d_i}{\sum _{j=1}^k ( d_{k+1} -  d_j)} .
    \end{equation}
    Furthermore, for any $i, j \leq k$,
    \begin{align*}
        | p_i^{(1)} -  p_j^{(1)}| & = \frac{| d_j -  d_i|}{\sum _{j=1}^k ( d_{k+1} -  d_j)} \\
        & \leq \frac{| d_k -  d_1|}{\sum _{j=1}^k ( d_{k+1} -  d_j)} \\
        & = \frac{-\log (\mu - \varepsilon)}{\sum _{j=1}^k (\log \frac{1}{\tau} - \log C -  d_j)} \\
        & \leq \frac{-\log (\mu - \varepsilon)}{\sum _{j=1}^k (\log \frac{1}{\tau} - \log C -  d_k)} \\
        &\leq \frac{-\log (\mu - \varepsilon)}{k (\log(\mu - \varepsilon) - \log \tau)} \\
        & = \frac{1}{k} \cdot \frac{\log (\mu - \varepsilon)}{ \log \tau - \log(\mu - \varepsilon)} \\
        & = \frac{1}{k} \cdot \frac{1}{\frac{\log (1/\tau)}{\log (1/(\mu - \varepsilon))} - 1} \\
        & \leq \frac{1}{k} \cdot \frac{1}{\frac{\log (1 / \varepsilon)}{\log 1 / (\mu - \varepsilon)} - 1} 
        = \mathcal{O}(\frac{\log (1 / \mu)}{\log (1 / \varepsilon)}) . 
    \end{align*}
    For any $i \leq k$, we have the following formulation based on the above result 
    \begin{align*}
        |p_i^{(1)} - \frac{1}{k}| & \leq |p_i^{(1)} - \frac{1}{k} \sum _{j=1}^k p_j^{(1)}| \\
        & \leq \frac{1}{k} |\sum _{j=1}^k (p_i^{(1)} - p_j^{(1)})| \\
        & \leq \frac{1}{k} \sum _{j=1}^k | (p_i^{(1)} - p_j^{(1)}) | \\
        & \leq \mathcal{O}(\frac{\log (1 / \mu)}{\log (1 / \varepsilon)}) . 
    \end{align*}
    The proof is easy to extend to other nodes. Hence, the theorem is proved.
\end{proof}

The following lemma shows that the updated anchors will approximate the mean vectors 
of the connected nodes if $p^{(t)}(u_j | v_i)$ approaches the sparse and uniform distribution.

\begin{myLemma} \label{lemma_mean}
    Let $\mathcal{N}_j$ denote the nodes that connect the anchor $u_j$ and 
    $p(u_j | v_i)$ represent a $k$-sparse solution given by Theorem \ref{theo}. 
    If $|p_j(\cdot | v_i) - \frac{1}{k}| \leq \varepsilon$ for any $j \leq k$, then 
    \begin{equation}
        \|\frac{\sum _{i=1}^n p(u_j | v_i) f(\bm x_i)}{\sum _{i=1}^n p(u_j | v_i)} - \frac{1}{n_j} \sum \limits_{i \in \mathcal{N}_j} f(\bm x_i) \|_2^2 \leq \mathcal O(\varepsilon) .
    \end{equation}
\end{myLemma}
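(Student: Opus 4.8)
The plan is to write every positive connection probability as a perturbation of the uniform value $1/k$ and then expand both the weighted mean (the updated anchor) and the unweighted mean as rational functions of these perturbations. Fix an anchor $u_j$ and recall that, since $\{p(u_j|v_i)\}$ is the $k$-sparse solution from Theorem \ref{theo}, we have $i \in \mathcal{N}_j$ exactly when $p(u_j|v_i) > 0$; for such $i$ the hypothesis gives $p(u_j|v_i) = \tfrac1k + \epsilon_{ij}$ with $|\epsilon_{ij}| \le \varepsilon$, while $p(u_j|v_i) = 0$ for $i \notin \mathcal{N}_j$. Writing $\bar f_j = \tfrac{1}{n_j}\sum_{i\in\mathcal{N}_j} f(\bm x_i)$ and $S_j = \sum_{i\in\mathcal{N}_j}\epsilon_{ij}$ (so $|S_j| \le n_j\varepsilon$), one immediately gets $\sum_{i=1}^n p(u_j|v_i) = \tfrac{n_j}{k} + S_j$ and $\sum_{i=1}^n p(u_j|v_i) f(\bm x_i) = \tfrac{n_j}{k}\bar f_j + \sum_{i\in\mathcal{N}_j}\epsilon_{ij} f(\bm x_i)$, assuming the non-degenerate case $n_j \ge 1$.

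Next I would subtract $\bar f_j$ and simplify. A direct computation gives
\[
    \frac{\sum_{i=1}^n p(u_j|v_i) f(\bm x_i)}{\sum_{i=1}^n p(u_j|v_i)} - \bar f_j = \frac{\sum_{i\in\mathcal{N}_j}\epsilon_{ij}\,(f(\bm x_i) - \bar f_j)}{\tfrac{n_j}{k} + S_j},
\]
so the numerator is a sum of $n_j$ centered vectors, each scaled by a factor of magnitude at most $\varepsilon$. Since the sample set is finite, $M := \max_{i,i'}\|f(\bm x_i) - f(\bm x_{i'})\|_2$ is a fixed constant, hence the numerator has norm at most $n_j \varepsilon M$. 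For the denominator, $S_j \ge -n_j\varepsilon$ implies $\tfrac{n_j}{k} + S_j \ge n_j(\tfrac1k - \varepsilon)$, which is positive once $\varepsilon < \tfrac1k$. Dividing, the norm of the left-hand side is at most $\tfrac{\varepsilon M}{1/k - \varepsilon} = \tfrac{kM}{1-k\varepsilon}\,\varepsilon$, and squaring this yields the claimed $\mathcal{O}(\varepsilon)$ bound (in fact $\mathcal{O}(\varepsilon^2)$, which is stronger for small $\varepsilon$).

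The only genuine subtlety is the uniform bound on $\|f(\bm x_i) - \bar f_j\|_2$: I would justify it by the finiteness of the dataset, equivalently by a boundedness assumption on the encoder's output (which holds for the bounded activations used in practice), so that the constant $M$ can be absorbed into the $\mathcal{O}(\cdot)$ notation. Everything else is an elementary first-order perturbation estimate, and since the bound is uniform in $j$ it holds simultaneously for every anchor $u_j$, completing the proof.
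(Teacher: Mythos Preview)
Your proof is correct and follows essentially the same approach as the paper: both express the difference of the weighted anchor and the unweighted mean as a ratio whose numerator is $\mathcal{O}(n_j\varepsilon)$ times a feature scale and whose denominator is $\Theta(n_j/k)$, then divide and square. The only cosmetic difference is that you center the feature vectors (writing the numerator as $\sum_{i\in\mathcal N_j}\epsilon_{ij}(f(\bm x_i)-\bar f_j)$), whereas the paper centers the weights (writing it as $\sum_{i\in\mathcal N_j}(p(u_j|v_i)-s_j/n_j)\bm f_i$ and bounding $|p(u_j|v_i)-s_j/n_j|\le 2\varepsilon$); both lead to the same $\mathcal{O}(\varepsilon)$ bound on the norm, hence $\mathcal{O}(\varepsilon^2)\subseteq\mathcal{O}(\varepsilon)$ on its square.
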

\begin{proof}
    To keep the notation uncluttered, let $\bm f_i = f(\bm x_i)$ and 
    \begin{equation}
        \bm h_j = \frac{\sum _{i=1}^n p(u_j | v_i) f(\bm x_i)}{\sum _{i=1}^n p(u_j | v_i)} .
    \end{equation}
    Note that $\sum _{i=1}^n p(u_j | v_i) f(\bm x_i) = \sum _{i \in \mathcal{N}_j} p(u_j | v_i) f(\bm x_i)$.
    Let $n_j = |\mathcal{N}_j|$ and we have 
    \begin{equation}
        \|\bm h_j - \frac{1}{n_j} \sum \limits_{i \in \mathcal{N}_j } \bm f_i \|_2
        = \frac{1}{s_j} \|\sum \limits_{i \in \mathcal{N}_j} (p(u_j | v_i) - \frac{s_j}{n_j}) \bm f_i \|_2, 
    \end{equation}
    where $s_j = \sum _{i=1}^n p(u_j | v_i)$. According to the condition, 
    we have 
    \begin{align*}
        & \frac{1}{k} - \varepsilon \leq p(u_j | v_i) \leq \frac{1}{k} + \varepsilon \\
        ~ \Rightarrow ~ & \frac{n_j}{k} - n_j \cdot \varepsilon \leq s_j \leq \frac{n_j}{k} + n_j \cdot \varepsilon .   
    \end{align*}
    Accordingly, we have 
    \begin{align*}
            & \|\bm h_j - \frac{1}{n_j} \sum \limits_{i \in \mathcal{N}_j } \bm f_i \|_2 \\
            \leq & \sum \limits_{i \in \mathcal{N}_j} \frac{1}{s_j} |p(u_j | v_i) - \frac{s_j}{n_j}| \cdot \|\bm f_i\|_2 \\
            \leq & \sum \limits_{i \in \mathcal{N}_j} \frac{\|\bm f_i\|_2}{s_j} \max \{|p(u_j | v_i) - \frac{1}{k} + \varepsilon|, |p(u_j | v_i) - \frac{1}{k} - \varepsilon|\} \\
            \leq & \sum \limits_{i \in \mathcal{N}_j} \frac{\|\bm f_i\|_2}{s_j} (|p(u_j | v_i) - \frac{1}{k}| + |\varepsilon|) \\
            \leq & \sum \limits_{i \in \mathcal{N}_j} \frac{2 \|\bm f_i\|_2}{s_j} \varepsilon = \mathcal{O}(\varepsilon).
    \end{align*}
    Hence, the theorem is proved. 
\end{proof}
With the following lemma, we can prove Theorem \ref{theo} via the mathematical induction method.

\begin{myLemma} \label{lemma_disjoint}
    Suppose that $q_j(\cdot | v_i) \leq \varepsilon$ for $j > k$,
    and $d(v_i, u_j) \leq M$. 
    Then there exists $\delta > 0$ such that
    if $\varepsilon \leq \delta$ and $v_i$ does not connect with $u_j$, then for any $v' \in \mathcal{N}_j$, 
    $v_i$ and $v'$ share no anchors. 
\end{myLemma}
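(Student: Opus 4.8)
The plan is to prove the lemma by contradiction, exploiting that the squared dissimilarities $d(\cdot,\cdot)$ are squared Euclidean distances in the embedding produced by the trained GCN (Eq.~(\ref{eq_metric}) and its anchor analogue), so $\rho(\cdot,\cdot):=\sqrt{d(\cdot,\cdot)}$ is a genuine metric obeying the triangle inequality. First I would rewrite ``connects'' in terms of distances via Eq.~(\ref{eq_p}): $p(u_j\mid v_i)>0$ exactly when $d(v_i,u_j)<d_{k+1}(v_i,\cdot)$, i.e. $u_j$ is among the $k$ nearest anchors of $v_i$. Hence ``$v_i$ does not connect with $u_j$'' means $u_j$ has rank $>k$ for $v_i$, so the hypothesis $q_j(\cdot\mid v_i)\le\varepsilon$ applies to this particular $u_j$; and the bound $d(\cdot,\cdot)\le M$ is used only for \emph{connected} node--anchor pairs (the dissimilarities actually realized in the graph), which is what gives the ``sharing'' hypothesis force rather than making the statement vacuous.

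Second, I would turn the reconstruction bound into a quantitative statement that $u_j$ is far from $v_i$. Because every $d(v_i,u_l)\ge 0$, the normalizer satisfies $\sum_{l=1}^m e^{-d(v_i,u_l)}\le m$, so $q(u_j\mid v_i)=e^{-d(v_i,u_j)}/\sum_l e^{-d(v_i,u_l)}\le\varepsilon$ forces $d(v_i,u_j)\ge\log\frac{1}{m\varepsilon}$, which diverges as $\varepsilon\to 0$. (A sharper denominator bound $\sum_l e^{-d(v_i,u_l)}\le m\,e^{-d_1(v_i,\cdot)}$ is available but unnecessary.)

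Third, assume for contradiction that $v_i$ and some $v'\in\mathcal N_j$ share an anchor $u_l$. Then $v_i$--$u_l$, $v'$--$u_l$, and (since $v'\in\mathcal N_j$) $v'$--$u_j$ are all connected pairs, hence $d(v_i,u_l),d(v',u_l),d(v',u_j)\le M$, i.e. $\rho(v_i,u_l),\rho(v',u_l),\rho(v',u_j)\le\sqrt M$; note we deliberately do \emph{not} bound $d(v_i,u_j)$ this way, as $v_i$ is not connected to $u_j$. The triangle inequality then yields $\rho(v_i,u_j)\le\rho(v_i,u_l)+\rho(u_l,v')+\rho(v',u_j)\le 3\sqrt M$, i.e. $d(v_i,u_j)\le 9M$. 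Combining with the second step, $\log\frac{1}{m\varepsilon}\le 9M$, i.e. $\varepsilon> e^{-9M}/m$. Taking the contrapositive proves the lemma: choosing $\delta=e^{-9M}/m$ (or any smaller positive number), whenever $\varepsilon\le\delta$ no shared anchor $u_l$ can exist, which is exactly the claim.

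The step I expect to cost the most care is the triangle-inequality chaining in the third paragraph together with the bookkeeping of which dissimilarities are legitimately bounded by $M$: only connected node--anchor pairs are, so $d(v_i,v')$ and $d(u_l,u_j)$ must be reached through a node--anchor--node detour, which is why $d(v_i,u_j)$ comes out $\le 9M$ rather than something tighter. Two minor points deserve a line each: ties in the ranking of $\{d(v_i,u_j)\}_j$ (so that ``rank $>k$'' and ``sharing an anchor'' are unambiguous), and checking $m\varepsilon<1$ in the regime $\varepsilon\le\delta$ so that $\log\frac{1}{m\varepsilon}>0$. One should also record that the two hypotheses pull against each other — good reconstruction forces $d(v_i,u_j)$ large, while connectivity caps the chained distances — so the usable threshold is necessarily of order $e^{-\Theta(M)}/m$; this is acceptable, since the lemma serves only as one inductive step toward the collapse quantified in Theorem~\ref{theo}.
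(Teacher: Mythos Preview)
Your proof is correct and follows the same overall strategy as the paper: argue by contradiction, use the reconstruction bound $q\le\varepsilon$ to force the disconnected distance $d(v_i,u_j)$ to be large, use the bound $M$ on connected distances together with a triangle-inequality chain through the shared anchor to force it to be small, and read off a threshold $\delta$ from the resulting incompatibility.

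There are two differences in execution worth noting. First, you work with $\rho=\sqrt{d}$, which is the genuine metric since $d$ is the \emph{squared} Euclidean distance of Eq.~(\ref{eq_metric}); the paper applies the triangle inequality directly to $d$ as if it were a metric, which is formally loose but leads to the same qualitative conclusion with a different constant ($4M$ in the exponent rather than your $9M$). Second, to lower-bound $d(v_i,u_j)$ from $q(u_j\mid v_i)\le\varepsilon$ you use the crude normalizer bound $\sum_l e^{-d(v_i,u_l)}\le m$, whereas the paper instead introduces $\mu=\min_i q_k(\cdot\mid v_i)$ and compares the ratio $q(u_c\mid v_i)/q(u_a\mid v_i)=\exp(d(v_i,u_a)-d(v_i,u_c))$ for a connected anchor $u_a$; this trades the factor $1/m$ in your $\delta=e^{-9M}/m$ for a factor $\mu$ in the paper's $\delta=\mu\,e^{-4M}$. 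Either bookkeeping suffices for the lemma's purpose as an inductive ingredient in Theorem~\ref{theo}, and your version is arguably the more careful one regarding which quantity actually satisfies the triangle inequality.
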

\begin{proof}
    At first, let $\mu = \min_i q_k(\cdot | v_i)$.
    Hence, $q_k(\cdot | v_i) \geq \mu \geq \varepsilon$.
    For an arbitrary node $v_i$ connected with $u_a$ and $u_b$ 
    and an anchor $u_c$ which disconnects with $v_i$, 
    suppose that there exists a node $v_j$ such that $v_j$ 
    connects with $u_b$ and $u_c$ but disconnects with $u_a$.
    According to the triangular inequality, we have 
    \begin{align*}
        \frac{\exp(-d(u_a, u_c))}{\exp(-d(v_i, u_a))} 
        & \leq \frac{\exp(-d(v_i, u_c) + d(v_i, u_a))}{\exp(-d(v_i, u_a))} \\
        & =\frac{\exp(-d(v_i, u_c))}{\exp(-2d(v_i, u_a))} \\
        & = \frac{q(u_c | v_i)}{q(u_a | v_i) \exp(-d(v_i, u_a))} \\
        & \leq \frac{\varepsilon}{\mu} e^M .
    \end{align*}
    According to the triangular inequality, 
    \begin{align*}
        & d(v_i, u_a) + \log \frac{\mu}{\varepsilon} - M \\
        \leq ~ & d(u_a, u_c) < d(u_a, u_b) + d(u_b, u_c) \\
        < ~ & d(v_i, u_a) + d(v_i, u_b) + d(v_j, u_c) + d(v_j, u_b) . 
    \end{align*}
    Therefore, 
    \begin{eqnarray} \label{ineq_contradiction}
        \log \frac{\mu}{\varepsilon} < 4 M . 
    \end{eqnarray}
    If we define $\delta$ as 
    \begin{equation}
        \delta = \min \{\mu \cdot \exp(-4 M), \mu\}, 
    \end{equation}
    then Ineq. (\ref{ineq_contradiction}) results in 
    a contradiction when $\varepsilon \leq \delta$.
\end{proof}

\begin{proof}[Proof of Theorem \ref{theo}]
    The proof is to analyze each iteration of step 6.
    Let $\mu = \min_i p_k^{(1)}(\cdot | v_i)$ and we can obtain that 
    $| p_j^{(1)}(\cdot | v_i) - \frac{1}{k}| \leq \mathcal{O}(\frac{\log (1 / \mu)}{\log (1 / \varepsilon)})$ 
    provided that $\varepsilon \leq \mu$, according to Lemma \ref{lemma_uniform}. 
    Certainly, the conclusion constructs the precondition of Lemma \ref{lemma_mean}.

    Let $\bm \theta_j = \frac{1}{|\mathcal{N}_j|} \sum_{\bm x_i \in \mathcal{N}_j} f(\bm x_i)$.
    According to Lemma \ref{lemma_mean}, we have 
    $\|f^{(1)}(\bm c_j) - \bm \theta_j\|_2^2 \leq \mathcal{O}(\frac{\log (1 / \mu)}{\log (1 / \varepsilon)})$ 
    and we can further derive that 
    \begin{align*}
        \|f(\bm x_i) - f^{(1)}(\bm c_j)\|_2 
        &\leq \|f(\bm x_i) - \bm \theta_j\|_2 
         + \|f^{(1)}(\bm c_j) - \bm \theta_j\|_2 \\
        & \leq \|f(\bm x_i) - \bm \theta_j\|_2 + \mathcal{O}(\sqrt{\frac{\log (1 / \mu)}{\log (1 / \varepsilon)}}) , 
    \end{align*}
    and
    \begin{align*}
            \|f(\bm x_i) - f^{(1)}(\bm c_j)\|_2 
            & \geq \|f(\bm x_i) - \bm \theta_j\|_2 - \|f^{(1)}(\bm c_j) - \bm \theta_j\|_2 \\
            & \geq \|f(\bm x_i) - \bm \theta_j\|_2 - \mathcal{O}(\sqrt{\frac{\log (1 / \mu)}{\log (1 / \varepsilon)}})  .
    \end{align*}
    Therefore, for $d^{(1)}(v_i, u_j)$, we have 
    \begin{align*}
        & \mathcal{O}(\frac{\log (1 / \mu)}{\log (1 / \varepsilon)}) - 2 \|f(\bm x_i) - \bm \theta_j\|_2 \mathcal{O}(\sqrt{\frac{\log (1 / \mu)}{\log (1 / \varepsilon)}}) \\
        \leq & d^{(1)}(v_i, u_j) - \|f(\bm x_i) - \bm \theta_j\|_2^2 \\
        \leq & \mathcal{O}(\frac{\log (1 / \mu)}{\log (1 / \varepsilon)}) + 2 \|f(\bm x_i) - \bm \theta_j\|_2 \mathcal{O}(\sqrt{\frac{\log (1 / \mu)}{\log (1 / \varepsilon)}}) .
    \end{align*}
    Due to that $d^{(1)}(v_i, u_j)$ is upper-bounded, 
    the above conclusions are provided as the preconditions of Lemma \ref{lemma_disjoint}.
    To keep simplicity, let $r = \|f(\bm x_i) - \bm \theta_j\|_2$. 
    Therefore, we know that for any $l \leq k$, 
    \begin{align*}
        & d^{(1)}_{k+1}(v_i, \cdot) - d^{(1)}_l (v_i, \cdot) \\
        \leq ~ & d^{(1)}_{k+1}(v_i, \cdot) - r^2 
        - \mathcal{O}(\frac{\log (1 / \mu)}{\log (1 / \varepsilon)}) + 2 r \mathcal{O}(\sqrt{\frac{\log (1 / \mu)}{\log (1 / \varepsilon)}}) , 
    \end{align*}
    and
    \begin{align*}
            & d^{(1)}_{k+1}(v_i, \cdot) - d^{(1)}_l (v_i, \cdot) \\ 
            \geq ~ & d^{(1)}_{k+1}(v_i, \cdot) - r^2 
            - \mathcal{O}(\frac{\log (1 / \mu)}{\log (1 / \varepsilon)}) - 2 r \mathcal{O}(\sqrt{\frac{\log (1 / \mu)}{\log (1 / \varepsilon)}}) .  
    \end{align*}
    Furthermore, as 
    \begin{align*}
            & p^{(2)}(u_j | v_i) \\
            = & \frac{(d^{(1)}_{k+1}(v_i, \cdot) - d^{(1)}(v_i, \cdot) )_+}{\sum _{l=1}^k (d^{(1)}_{k+1}(v_i, \cdot) - d^{(1)}_l (v_i, \cdot))} \\
            \leq & \frac{d^{(1)}_{k+1}(v_i, \cdot) - r^2 - \mathcal{O}(\frac{\log (1 / \mu)}{\log (1 / \varepsilon)}) + 2 r \mathcal{O}(\sqrt{\frac{\log (1 / \mu)}{\log (1 / \varepsilon)}})}{\sum _{i=1}^k [d_{k+1}^{(1)}(v_i, \cdot) - r^2] - k \mathcal{O}(\frac{\log (1 / \mu)}{\log (1 / \varepsilon)}) - 2 k r \mathcal{O}(\sqrt{\frac{\log (1 / \mu)}{\log (1 / \varepsilon)}})},
    \end{align*}
    we have 
    \begin{align*}
            & p^{(2)}(u_j | v_i) - \frac{1}{k} \\ 
            \leq ~ & \frac{4 r \mathcal{O}(\sqrt{\frac{\log (1 / \mu)}{\log (1 / \varepsilon)}})}{k [d_{k+1}^{(1)}(v_i, \cdot) - r^2 - \mathcal{O}(\frac{\log (1 / \mu)}{\log (1 / \varepsilon)}) - 2 r \mathcal{O}(\sqrt{\frac{\log (1 / \mu)}{\log (1 / \varepsilon)}}) ]} \\
            = ~ & \mathcal{O}(\sqrt{\frac{\log (1 / \mu)}{\log (1 / \varepsilon)}}) .
    \end{align*}
    Similarly, we can also infer that 
    \begin{align*}
        & p^{(2)}(u_j | v_i) - \frac{1}{k} \\ 
        \leq ~ & \frac{- 4 r \mathcal{O}(\sqrt{\frac{\log (1 / \mu)}{\log (1 / \varepsilon)}})}{k [d_{k+1}^{(1)}(v_i, \cdot) - r^2 - \mathcal{O}(\frac{\log (1 / \mu)}{\log (1 / \varepsilon)}) + 2 r \mathcal{O}(\sqrt{\frac{\log (1 / \mu)}{\log (1 / \varepsilon)}}) ]} \\
        = ~ & \mathcal{O}(\sqrt{\frac{\log (1 / \mu)}{\log (1 / \varepsilon)}}) .
    \end{align*}
    According to Lemma \ref{lemma_disjoint}, there exists a constant $\delta$ such that when $\varepsilon \leq \delta$, 
    $p^{(2)}(u_j | v_i)$ approximates the sparse and uniform distribution, 
    \begin{equation}
        |p^{(2)} (u_j | v_i) - \frac{1}{k}| \leq \mathcal{O}(\sqrt{\frac{\log (1 / \mu)}{\log (1 / \varepsilon)}}) = \mathcal{O}(\log^{-1/2}(1/\varepsilon)).
    \end{equation}
    We can repeat the above proof for every iteration to update 
    $p^{(t)}(u_j | v_i)$ and $f^{(t)}(\bm c_j)$, and therefore, the theorem 
    is proved.
\end{proof}

\section{Conclusion}
In this paper, we propose an anchor-siamese GCN clustering with 
adaptive $\mathcal O(n)$ bipartite convolution (\textit{AnchorGAE}). 
The generative perspective for weighted graphs helps us to build the 
relationship between the non-graph-type data and the graph-type data, 
and provides a reconstruction goal. 
Since the anchor-based bipartite graph factorizes the adjacency matrix, 
we can rearrange the order of matrix multiplications to accelerate the 
graph convolution. 
Experiments show that AnchorGAE consumes less time especially on 
MNIST-full and obtains impressive performance. 
Ablation experiments also verify the necessity of updating anchors, 
transition probability, and sparsity in a self-supervised way. 
From both theoretical and empirical aspects, we show that the simple update 
would cause a collapse when the model is well-trained. 
The designed mechanism to connect different groups in advance is also 
verified by experiments. 
Since the original images of anchors are estimated roughly, we will focus 
on how to exactly compute the original images from the embedding space 
in future work. 
To further speed up the model, it is worthy 
developing stochastic gradient descent for AnchorGAE in the future.


\bibliographystyle{IEEEtran}
\bibliography{citations.bib}

\begin{IEEEbiography}[{\includegraphics[width=1in,height=1.25in,clip,keepaspectratio]{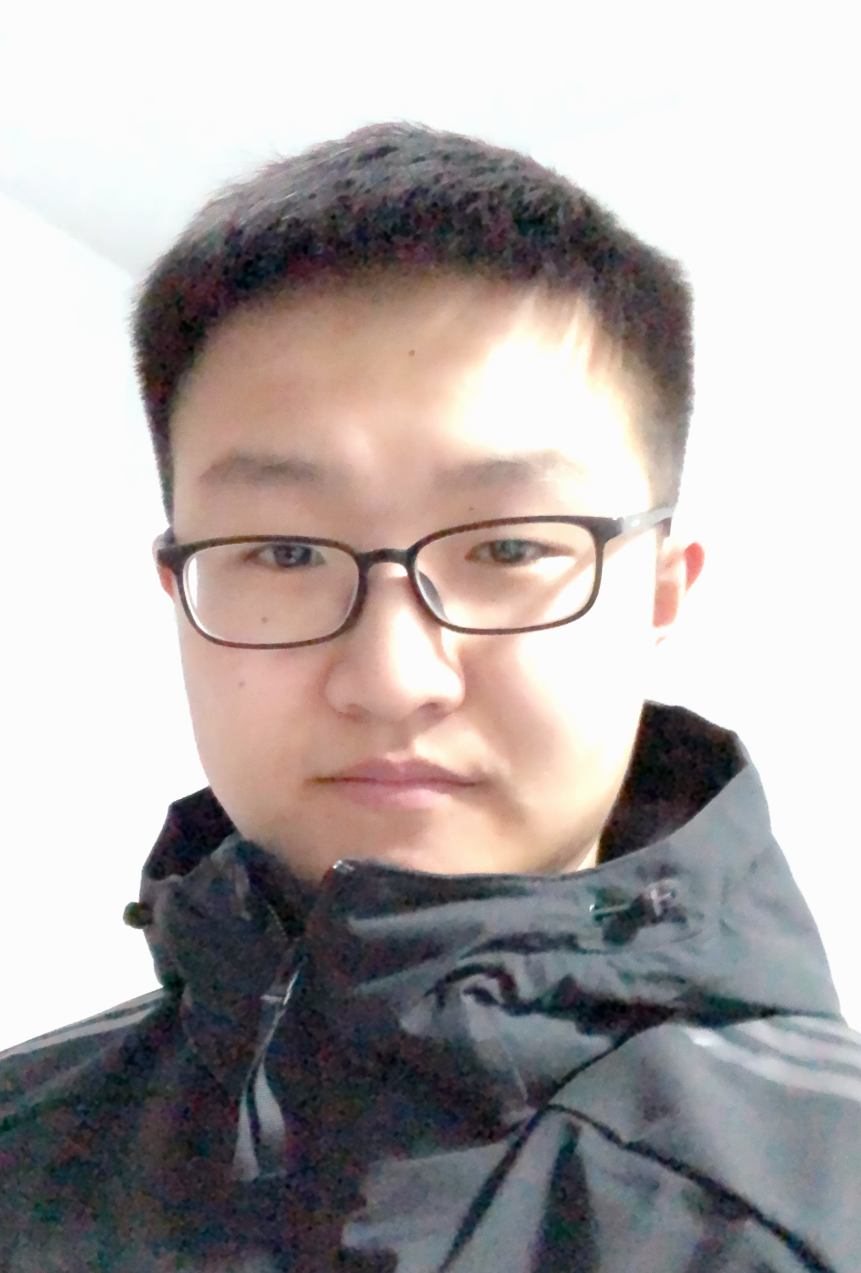}}]{Hongyuan Zhang}
    received the B.E. degree in software engineering from Xidian University, Xi'an, China in 2019. 
    He is currently pursuing the Ph.D. degree from the School of Computer Science and the School of Artificial Intelligence, OPtics and ElectroNics (iOPEN), Northwestern Polytechnical University, Xi'an, China. 
\end{IEEEbiography}

\begin{IEEEbiography}[{\includegraphics[width=1in,height=1.25in,clip,keepaspectratio]{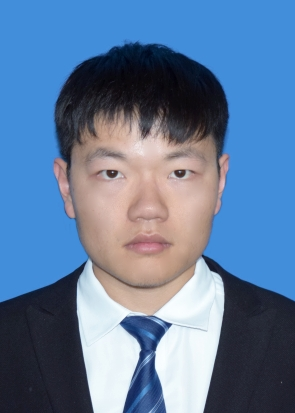}}]{Jiankun Shi}
    received the B.E. degree in software engineering from Henan University, Zhengzhou, China in 2020. 
    He is currently working toward the master's degree with the School of Artificial Intelligence, OPtics and ElectroNics(iOPEN), Northwestern Polytechnical University, Xi'an, China.
\end{IEEEbiography}
    
\begin{IEEEbiography}[{\includegraphics[width=1in,height=1.25in,clip,keepaspectratio]{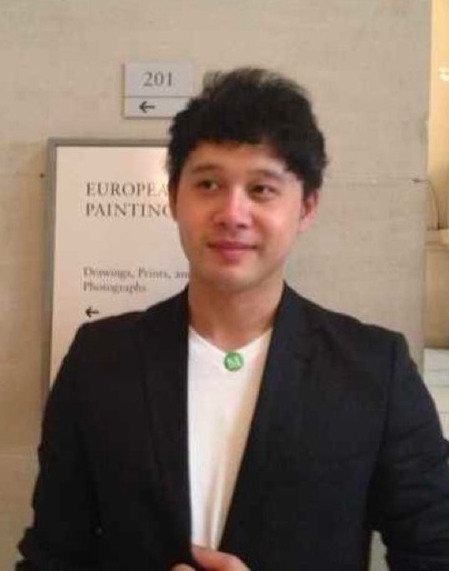}}]{Rui Zhang} (M'19)
    received the Ph.D degree in computer science at Northwestern Polytechnical University, Xi'an, China in 2018. 
    He currently serves as an Associate Professor with the School of Artificial Intelligence, OPtics and ElectroNics (iOPEN), Northwestern Polytechnical University, Xi'an, China.
\end{IEEEbiography}

\begin{IEEEbiographynophoto}{Xuelong Li} (M'02-SM'07-F'12) 
    is a Full Professor with the School of Artificial Intelligence, OPtics and ElectroNics (iOPEN), Northwestern Polytechnical University, Xi'an, China. 
\end{IEEEbiographynophoto}

\end{document}